\newcommand*{\addFileDependency}[1]{
  \typeout{(#1)}
  \@addtofilelist{#1}
  \IfFileExists{#1}{}{\typeout{No file #1.}}
}
\newcommand{\bA}{\mathbf{A}}
\newcommand{\bD}{\mathbf{D}}
\newcommand{\bH}{\mathbf{H}}
\newcommand{\bI}{\mathbf{I}}
\newcommand{\bL}{\mathbf{L}}
\newcommand{\bU}{\mathbf{U}}
\newcommand{\bPi}{\bm{\Pi}}
\newcommand{\bTheta}{\bm{\Theta}}
\newcolumntype{H}{>{\setbox0=\hbox\bgroup}c<{\egroup}@{}}
\newtheorem{example}{Example}
\newtheorem{theorem}{Theorem}
\title{MagNet: A 
Neural Network for Directed Graphs}
\author[1]{Xitong Zhang}
\author[2]{Yixuan He}
\author[1,3]{Nathan Brugnone}
\author[4]{Michael Perlmutter}
\author[1,5,6]{Matthew Hirn}
\affil[1]{Michigan State University, Department of Computational Mathematics, Science \& Engineering\\ East Lansing, Michigan, United States}
\affil[2]{University of Oxford, Department of Statistics, Oxford, England, United Kingdom}
\affil[3]{Michigan State University, Department of Community Sustainability,\newline East Lansing, Michigan, United States}
\affil[4]{University of California, Los Angeles, Department of Mathematics,\newline Los Angeles, California, United States}
\affil[5]{Michigan State University, Department of Mathematics, \newline East Lansing, Michigan, United States}
\affil[6]{Michigan State University, Center for Quantum Computing, Science \& Engineering,\newline East Lansing, Michigan, United States}
\begin{document}

\maketitle

\begin{abstract}
The prevalence of graph-based data has spurred the rapid development of graph neural networks (GNNs) and related machine learning algorithms. Yet, despite the many datasets naturally modeled as directed graphs, including citation, website, and traffic networks, the vast majority of this research focuses  
on undirected graphs. In this paper, we propose \textit{MagNet}, a spectral GNN for directed graphs based on a complex Hermitian matrix known as the magnetic Laplacian. This matrix encodes undirected geometric structure in the magnitude of its entries and directional information in their phase. A ``charge'' parameter attunes spectral information to variation among directed cycles. We apply our network to a variety of directed graph node classification and link prediction tasks showing that MagNet  performs well on all tasks and that its performance exceeds all other methods on a  majority of such tasks.  
The underlying principles of MagNet are such that it can be 
adapted to 
other spectral GNN architectures.
\end{abstract}

\section{Introduction}\label{sec: intro}

Endowing a collection of objects with a graph structure allows one to encode pairwise relationships among its elements. These relations often possess a natural notion of direction. For example, the WebKB dataset        \cite{pei2020geom} 
contains a list of university websites with associated hyperlinks. In this context, one website might link to a second without a reciprocal link to the first. Such datasets are naturally modeled by \textit{directed graphs}. In this paper, we introduce \textit{MagNet}, a graph convolutional neural network for directed graphs based on the magnetic Laplacian.

Most graph neural networks fall into one of two families, \textit{spectral networks} or \textit{spatial networks}.  Spatial methods define graph convolution as a localized averaging operation with iteratively learned weights. Spectral networks, on the other hand, define convolution on graphs via the eigendecompositon of the (normalized) graph Laplacian. The eigenvectors of the graph Laplacian assume the role of Fourier modes, and convolution is defined as entrywise multiplication in the Fourier basis.   
For a comprehensive review of both spatial and spectral networks, we refer the reader to \cite{zhou2018graph} and \cite{wu:gnnSurvey2020}.

Many spatial graph CNNs have natural extensions to directed graphs. However, it is common for these methods to preprocess the data by symmetrizing the adjacency matrix, effectively creating an undirected graph. For example, while \cite{velivckovic2017graph} explicitly notes that their network is well-defined on directed graphs, their experiments treat all citation networks as undirected for improved performance. 

Extending spectral methods to directed graphs is not straightforward since the adjacency matrix is asymmetric and, thus, there is no obvious way to define a symmetric, real-valued Laplacian with a full set of real eigenvalues that uniquely encodes any directed graph. We overcome this challenge by constructing a network based on the  magnetic Laplacian $\mathbf{L}^{(q)}$ defined in Section \ref{sec: mag}. Unlike the directed graph Laplacians used in works such as \cite{ma:spectralDGCN2019, monti:MotifNet2018, Tong2020DigraphIC,tong:directedGCN2020}, the magnetic Laplacian is not a real-valued symmetric matrix. Instead, it is a \textit{complex-valued Hermitian} matrix that encodes the fundamentally asymmetric nature of a directed graph via the  
complex phase of its entries. 

Since $\bL^{(q)}$ is Hermitian, the spectral theorem implies it has an orthonormal basis of complex eigenvectors corresponding to real eigenvalues.  
Moreover, Theorem \ref{thm: posdef}, stated in Appendix \ref{sec: eigs}, shows that $\mathbf{L}^{(q)}$ is positive semidefinite, similar to the traditional Laplacian.
Setting $q=0$ is equivalent to symmetrizing the adjacency matrix  and no importance is given to directional information. When $q=.25$, on the other hand, we have that $\bL^{(.25)}(u,v)=-\bL^{(.25)}(v,u)$ whenever there is an edge from $u$ to $v$ but not from $v$ to $u$.  Different values of $q$ highlight different graph motifs \cite{fanuel:magneticEigenmaps2018, fanuel2017Magnetic, krystal:hermitianAdjDigraphs2017, mohar2020new}, and therefore the optimal choice of $q$ varies. 
Learning the appropriate value of $q$ from data allows MagNet to adaptively incorporate directed information. We also note that $\mathbf{L}^{(q)}$ has been applied to graph signal processing  \cite{furutani:GSPdirectedHermit2019}, community detection  \cite{fanuel2017Magnetic}, and clustering \cite{CLONINGER2017370, fanuel:magneticEigenmaps2018, f2020characterization}.

In Section \ref{sec: conv}, we show how the networks constructed in \cite{bruna:spectralNN2014, Defferrard2018, kipf2016semi} can be adapted to  directed graphs by incorporating complex Hermitian matrices, such as the magnetic Laplacian. When $q=0$, we effectively recover the networks constructed in those previous works. Therefore, our work  generalizes these  networks in a way that is suitable for directed graphs. Our method is very general and is not tied to any
particular choice of  network architecture. Indeed, the main ideas of this work could be adapted to nearly any spectral network.

In Section \ref{sec: related}, we summarize related work on directed graph neural networks  as well as other papers studying the magnetic Laplacian and its applications in data science. In Section \ref{sec: experiments}, we apply our network to node classification and link prediction tasks. We compare against several spectral and spatial methods as well as networks designed for directed graphs. We find that MagNet obtains the best or second-best performance on five out of six node-classification tasks and has the best performance on seven out of eight link-prediction tasks tested on real-world data, in addition to providing excellent node-classification performance on difficult synthetic data. In the appendix, we provide full implementation details, theoretical results concerning the magnetic Laplacian, extended examples, and further numerical details.

\section{The magnetic Laplacian}
\label{sec: mag}

Spectral graph theory has been remarkably successful in relating geometric characteristics of undirected graphs to properties of eigenvectors and eigenvalues of 
graph Laplacians and related matrices. For example, the tasks of optimal graph partitioning, sparsification, clustering, and embedding may be approximated 
by eigenvectors corresponding to small eigenvalues of various  Laplacians (see, e.g., \cite{chung1997spectral, shi1997normalized, belkin2003laplacian, spielman2004nearly, coifman2006diffusion}). Similarly, the graph signal processing research community leverages the full set of eigenvectors to extend the Fourier transform to these structures  \cite{ortega2018graph}. 
Furthermore, numerous papers \cite{bruna:spectralNN2014, Defferrard2018, kipf2016semi} have shown that this eigendecomposition can be used to define  neural networks on graphs.  In this section, we provide the background 
needed to extend these constructions to directed graphs via complex Hermitian matrices such as the magnetic Laplacian. 

We let $G = (V, E)$ be a directed graph where $V$ is a set of $N$ vertices and $E\subseteq V\times  V$ is a set of directed edges. If $(u,v)\in E$, then we say there is an edge from $u$ to $v$. For the sake of simplicity, we will focus on the case where the graph is unweighted and has no self-loops, i.e., $(v,v) \notin E$, but our methods have natural extensions to graphs with self-loops and/or  weighted edges. If both $(u,v) \in E$ and $(v,u) \in E$, then one may consider this pair of directed edges as a single undirected edge. 

A directed graph can be described by an adjacency matrix $(\bA(u,v))_{u,v\in V}$ where $\bA(u,v)=1$  if $(u,v)\in E$ and $\bA(u,v)=0$ otherwise. 
Unless $G$ is undirected, $\bA$ is not symmetric, and, indeed, this is the key technical challenge in extending spectral graph neural networks to directed graphs. In the undirected case, where the adjacency matrix $\mathbf{A}$  is symmetric, the (unnormalized) graph Laplacian can be defined by $\mathbf{L}=\mathbf{D}-\mathbf{A},$ where $\mathbf{D}$ is a diagonal degree matrix. It is well-known that $\mathbf{L}$ is a symmetric, positive-semidefinite matrix and therefore has an orthonormal basis of eigenvectors associated with non-negative eigenvalues. However, when $\mathbf{A}$ is asymmetric, direct attempts to define the Laplacian this way typically yield complex eigenvalues. This impedes the straightforward extension of classical methods of spectral graph theory and graph signal processing to directed graphs. 

A key point of this paper is to represent the directed graph through a complex Hermitian matrix $\bm{\mathcal{L}}$ such that: (1) the magnitude of  $\bm{\mathcal{L}}(u,v)$ indicates the presence of an edge, but not its direction; and (2) the phase of  $\bm{\mathcal{L}}(u,v)$ indicates the direction of the edge, or if the edge is undirected. Such matrices have been explored in the directed graph literature (see Section \ref{sec: related}), but not in the context of graph neural networks. They have several advantages over their real-valued matrix counterparts. 
In particular, a single symmetric real-valued matrix will not uniquely represent a directed graph. 
Instead, one must use several 
matrices, as in \cite{tong:directedGCN2020}, but this increases the complexity of the resulting network. Alternatively, one can work with an asymmetric, real-valued matrix, such as the adjacency matrix or the random walk matrix.  
However, the spatial graph filters that result from such matrices are typically limited by the fact that they can only aggregate information from the 
vertices that can be reached in one hop from 
a central vertex, but 
ignore the equally important subset of vertices 
that can reach the central vertex in one hop. Complex Hermitian matrices, however, 
lead to filters that aggregate information from both sets of vertices.  
 Finally, one could use a real-valued skew-symmetric matrix 
 but such matrices do not generalize well to graphs with both directed and undirected edges.

The optimal choice of  complex Hermitian matrix 
is an open question. Here, we utilize a 
parameterized family of magnetic Laplacians, which have proven to be useful in other data-driven  
contexts \cite{fanuel2017Magnetic, CLONINGER2017370, fanuel:magneticEigenmaps2018, f2020characterization}.
We first define the symmetrized adjacency matrix and corresponding degree matrix by, 
\begin{equation*}
    \bA_s (u,v) \coloneqq \frac{1}{2} ( \bA (u,v) + \bA(v,u) ), \,\:\:1\leq u,v\leq N ,\quad     \bD_s (u,u) \coloneqq \sum_{v \in V} \bA_s (u,v),  \:\:1\leq u\leq N \, ,
\end{equation*}
with $\bD_s(u,v)=0$ for $u\neq v$. We capture  directional information via a phase matrix,\footnote{Our definition of $\bTheta^{(q)}$ coincides with that used in \cite{furutani:GSPdirectedHermit2019}. However, another definition (differing by a minus sign) also appears in the literature. These resulting magnetic Laplacians have the same eigenvalues and the corresponding eigenvectors are complex conjugates of one another. Therefore, this difference does not affect the performance of our network since our final layer separates the real and imaginary parts before multiplying by a trainable weight matrix (see Section \ref{sec: conv} for details on the network structure).} $\bTheta^{(q)}$, 
\begin{equation*}
    \bTheta^{(q)} (u,v) \coloneqq 2\pi q (\bA(u,v) - \bA(v,u)) \,, \quad q\geq 0 \, ,
\end{equation*}
where $\exp (i \bTheta^{(q)})$ is defined component-wise by
  $  \exp (i \bTheta^{(q)}) (u,v) \coloneqq \exp(i \bTheta^{(q)} (u,v))$. 
Letting $\odot$ denotes componentwise multiplication, we define the complex Hermitian adjacency matrix $\bH^{(q)}$ by
\begin{equation*}
    \bH^{(q)} \coloneqq \bA_s \odot \exp (i \bTheta^{(q)}) \, .
\end{equation*}
Since  $\bTheta^{(q)}$ is skew-symmetric,  $\bH^{(q)}$ is Hermitian. 
When $q=0$, we have $\bTheta^{(0)} = \bm{0}$ and so $\bH^{(0)}=\bA_s$. 
This  effectively  corresponds to  treating the graph as undirected. For $q \neq 0$, the phase of $\bH^{(q)}$ encodes edge direction. For example, if  there is an edge from $u$ to $v$ but not from $v$ to $u$ we have
\begin{equation*}
    \bH^{(.25)}(u,v)=\frac{i}{2}=-\bH^{(.25)}(v,u) \, .
\end{equation*}
Thus, in this setting, an edge from $u$ to $v$ is treated as the opposite of an edge from $v$ to $u$. On the other hand, if $(u,v), (v,u) \in E$ (which can be interpreted as a single undirected edge), then $\mathbf{H}^{(q)}(u,v) = \mathbf{H}^{(q)}(v,u) = 1$, and we see the phase, $\bm{\Theta}^{(q)}(u,v) = 0$, encodes the lack of direction in the edge. For the rest of this paper, we will assume that $q$ lies in between these two extreme values, i.e., $0\leq q \leq .25.$ We define the normalized and unnormalized  magnetic Laplacians by
\begin{equation}
    \bL_{U}^{(q)} \coloneqq \bD_s - \bH^{(q)} = \bD_s - \bA_s \odot \exp (i \bTheta^{(q)}),\quad \bL_{N}^{(q)} \coloneqq  \bI - \left(\bD_s^{-1/2}\bA_s\bD_s^{-1/2}\right) \odot \exp (i \bTheta^{(q)}) \, .
\end{equation}
Note that when $G$ is undirected, $\bL_U^{(q)}$ and $\bL_N^{(q)}$ reduce to the standard undirected Laplacians.

$\bL_U^{(q)}$ and $\bL_N^{(q)}$ are Hermitian. Theorem 1 (see Appendix \ref{sec: eigs}) shows they are positive-semidefinite and thus are diagonalized by an orthonormal basis of complex eigenvectors $\mathbf{u}_1, \ldots, \mathbf{u}_N$ associated to real, nonnegative eigenvalues $\lambda_1, \ldots, \lambda_N$. 
Similar to the traditional normalized Laplacian, Theorem 2 (see Appendix \ref{sec: eigs}) shows that the eigenvalues of $\mathbf{L}^{q}_N$ lie in  $[0,2]$, and we may factor $\bL_N^{(q)}= \mathbf{U} \bm{\Lambda} \mathbf{U}^\dagger$, 
where $\mathbf{U}$ is the $N\times N$ matrix whose $k$-th column is $\mathbf{u}_k$, $\bm{\Lambda}$ is the diagonal matrix with $\bm{\Lambda}(k,k)=\lambda_k$, and $\mathbf{U}^{\dagger}$ is the conjugate transpose of $\mathbf{U}$ (a similar formula holds for $\bL^{(q)}_U$). The magnetic Laplacian encodes geometric information in its eigenvectors and eigenvalues. In the directed star graph (see Appendix \ref{sec: star}), for example, directional information is contained in the eigenvectors only, whereas the eigenvalues are invariant to the direction of the edges. On the other hand, for the directed cycle graph the magnetic Laplacian encodes the directed nature of the graph solely in its spectrum. In general, both the eigenvectors and eigenvalues may contain important information. In Section \ref{sec: conv}, we will introduce MagNet, a network designed to leverage this spectral information. 
	  
\section{MagNet}
\label{sec: conv}

Most graph neural network architectures can be described as being either \textit{spectral} or \textit{spatial}. Spatial networks such as \cite{velivckovic2017graph, hamilton2017inductive, atwood2015diffusion, duvenaud2015convolutional} typically extend convolution to graphs by performing a weighted average of features over neighborhoods $\mathcal{N} (u) = \{v: (u,v)\in{E}\}$. These neighborhoods are well-defined even when ${E}$ is not symmetric, so spatial methods typically have natural extensions to directed graphs. However, such simplistic extensions may miss important information in the directed graph. For example, 
filters defined using $\mathcal{N}(u)$ are not capable of assimilating the equally important information contained in $\{ v : (v, u) \in E \}$. 
Alternatively, these methods may also use the symmetrized adjacency matrix, but they cannot learn to balance  directed and undirected approaches.

In this section, we show how to extend spectral methods to  directed graphs using 
the magnetic Laplacian introduced in Section \ref{sec: mag}. To highlight the flexibility of our approach, we show how three spectral graph neural network architectures can be adapted to incorporate the magnetic Laplacian. Our approach is very general, and so for most of this section, we will perform our analysis for a general complex Hermitian, positive semidefinite matrix. However, we view the magnetic Laplacian as our primary object of interest (and use it in all of our experiments in Section \ref{sec: experiments}) because of the large body of literature studying its spectral properties and applying it to data science (see Section \ref{sec: related}).

\subsection{Spectral convolution via the magnetic Laplacian}

In this section, we let $\bm{\mathcal{L}}$ denote a Hermitian, positive semidefinite matrix, such as the normalized or unnormalized magnetic Laplacian introduced in Section \ref{sec: mag}, on a directed graph $G=(V,E)$, $|V|=N$. 
We let $\mathbf{u}_1\ldots,\mathbf{u}_N$ be an orthonormal basis of eigenvectors for $\bm{\mathcal{L}}$ and let $\mathbf{U}$ be the $N\times N$ matrix whose $k$-th column is $\mathbf{u}_k$. 
We define the directed graph Fourier transform for a signal $\mathbf{x}: V \rightarrow \mathbb{C}$ by $\widehat{\mathbf{x}}=\mathbf{U}^\dagger \mathbf{x}$, so that  
   $ \widehat{\mathbf{x}}(k)= \langle \mathbf{x},\mathbf{u}_k\rangle \, .$
 We regard the eigenvectors $\mathbf{u}_1,\ldots,\mathbf{u}_N$ as the generalizations of discrete Fourier modes to directed graphs. Since $\bU$ is unitary, we have the Fourier inversion formula
\begin{equation}\label{eqn: inversion}
    \mathbf{x}=\mathbf{U}\widehat{\mathbf{x}} = \sum_{k=1}^N\widehat{\mathbf{x}}(k)\mathbf{u}_k \, .
\end{equation}

 In Euclidean space, convolution corresponds to pointwise multiplication in the Fourier basis. Thus, we define the 
convolution of $\mathbf{x}$ with a filter $\mathbf{y}$ in the Fourier domain by $\widehat{\mathbf{y}\ast\mathbf{x}}(k)=\widehat{\mathbf{y}}(k)\widehat{\mathbf{x}}(k)$. 
By \eqref{eqn: inversion}, this implies
    $\mathbf{y}\ast\mathbf{x} = \mathbf{U}\text{Diag}(\widehat{\mathbf{y}})\widehat{\mathbf{x}} = (\mathbf{U}\text{Diag}(\widehat{\mathbf{y}})\mathbf{U}^\dagger)\mathbf{x}$,  
and so we say $\mathbf{Y}$ is a convolution matrix if 
\begin{equation}\label{eqn: conv}
    \mathbf{Y} = \mathbf{U} \bm{\Sigma} \mathbf{U}^\dagger \, ,
\end{equation}
for a diagonal matrix $\bm{\Sigma}.$ This is the natural generalization of the class of  convolutions  used in \cite{bruna:spectralNN2014}.

Next, following  \cite{Defferrard2018}  (see also \cite{hammond2011wavelets}), we show that a spectral network can be implemented in the spatial domain via polynomials of  $\bm{\mathcal{L}}$ by having   $\bm{\Sigma}$  be a polynomial of $\bm{\Lambda}$ in \eqref{eqn: conv}. This reduces the number of trainable parameters to prevent overfitting, avoids explicit diagonalization of the matrix $\bm{\mathcal{L}}$, (which is expensive for large graphs), and  improves stability 
to perturbations \cite{levie2019transferability}.
As in \cite{Defferrard2018}, we define a normalized eigenvalue matrix, with entries in $[-1,1]$, by $\widetilde{\bm{\Lambda}}=\frac{2}{\lambda_{\max}} \bm{\Lambda}-\mathbf{I}$ and assume  
\begin{equation*}
    \bm{\Sigma} = \sum_{k=0}^K\theta_k T_k(\widetilde{\bm{\Lambda}}) \, ,
\end{equation*}
for some real-valued $\theta_1,\ldots,\theta_k,$ where for $k\geq 0$, $T_k$ is the Chebyshev polynomial defined by $T_0(x)=1, T_1(x)=x,$ and $T_{k}(x)=2xT_{k-1}(x)+T_{k-2}(x)$ for $k\geq 2.$ One can use the fact that $(\mathbf{U} \widetilde{\bm{\Lambda}} \mathbf{U}^\dagger)^k = \mathbf{U}\widetilde{\bm{\Lambda}}^k\mathbf{U}^\dagger$ to see 
\begin{equation}\label{eqn: ChebNet}
    \mathbf{Y}\mathbf{x}=\mathbf{U}\sum_{k=0}^K\theta_k T_k(\widetilde{\bm{\Lambda}})\mathbf{U}^\dagger\mathbf{x} = \sum_{k=0}^K\theta_k T_k(\widetilde{\bm{\mathcal{L}}}) \mathbf{x} \, ,
\end{equation}
where, analogous to  $\widetilde{\bm{\Lambda}}$, we define  $\widetilde{\bm{\mathcal{L}}} \coloneqq \frac{2}{\lambda_{\text{max}}}\bm{\mathcal{L}}-\mathbf{I}$.  
It is important to note that, due to the complex Hermitian structure of $\widetilde{\bm{\mathcal{L}}}$, the value $\mathbf{Y} \mathbf{x} (u)$ aggregates information both from the values of $\mathbf{x}$ on $\mathcal{N}_k(u)$, the $k$-hop neighborhood of $u$, and the values of $\mathbf{x}$ on $\{ v : \mathrm{dist}(v,u) \leq k \}$, which consists of those of vertices that can reach $u$ in $k$-hops. While in an undirected graph these two sets of vertices are the same, that is not the case for general directed graphs. Furthermore, due to the difference in phase between an edge $(u,v)$ and an edge $(v,u)$, the filter matrix $\mathbf{Y}$ is also capable of aggregating information from these two sets in different ways. This capability is in contrast to any single, symmetric, real-valued matrix, as well as any matrix that encodes just $\mathcal{N}(u)$.

To obtain a network similar to \cite{kipf2016semi}, we  set $K=1$, assume that $\bm{\mathcal{L}}=\bL^{(q)}_N$, using $\lambda_{\max} \leq 2$ (see Theorem \ref{thm: normal02} in Appendix \ref{sec: eigs}) make the approximation $\lambda_{\max} \approx 2$, and set  $\theta_1=-\theta_0.$ With this, we obtain
\begin{equation*}
    \mathbf{Y}\mathbf{x} = \theta_0(\mathbf{I}+(\mathbf{D}_s^{-1/2}\mathbf{A}_s\mathbf{D}_s^{-1/2})\odot \exp (i \bTheta^{(q)}))\mathbf{x} \, .
\end{equation*} 
As in \cite{kipf2016semi}, we  substitute $(\mathbf{I}+(\mathbf{D}_s^{-1/2}\mathbf{A}_s\mathbf{D}_s^{-1/2})\odot \exp (i \bTheta^{(q)})\rightarrow \widetilde{\mathbf{D}}_s^{-1/2}\widetilde{\mathbf{A}}_s\widetilde{\mathbf{D}}_s^{-1/2}\exp (i \bTheta^{(q)}).$
This renormalization helps avoid instabilities arising from vanishing/exploding gradients and yields
\begin{equation}\label{eqn: GCN}
    \mathbf{Y}\mathbf{x} = \theta_0 \widetilde{\mathbf{D}}_s^{-1/2}\widetilde{\mathbf{A}}_s\widetilde{\mathbf{D}}_s^{-1/2}\odot \exp (i \bTheta^{(q)}) \, ,
\end{equation}
where $\widetilde{\mathbf{A}}_s=\mathbf{A}_s+\mathbf{I}$ and $\widetilde{\mathbf{D}}_s(i,i)=\sum_{j}\widetilde{\mathbf{A}}_s(i,j)$. 

\subsection{The MagNet architecture}

We now 
define our network. Let $L$ be the number of convolution layers in our network, and let $\mathbf{X}^{(0)}$ be an $N\times F_0$ input feature matrix with columns $\mathbf{x}^{(0)}_1,\ldots\mathbf{x}^{(0)}_{F_0}$.
Since our filters are   complex, we use a complex version of  ReLU  
defined by $\sigma(z)=z$, if $-\pi/2\leq \arg(z)<\pi/2$, and $\sigma(z)=0$ otherwise
(where $\arg(z)$ is the complex argument of $z \in \mathbb{C}$). 
We let $F_{\ell}$ be the number of channels in layer $\ell$, and for $1\leq\ell\leq L$, $1 \leq i \leq F_{\ell-1}$, and $1\leq j\leq F_{\ell},$  we let $\mathbf{Y}^{(\ell)}_{ij}$ be a convolution matrix defined in the sense of either \eqref{eqn: conv}, \eqref{eqn: ChebNet}, or \eqref{eqn: GCN}. To obtain the layer $\ell$ channels from the layer $\ell-1$ channels, we define  the matrix $\mathbf{X}^{(\ell)}$ with columns 
$\mathbf{x}^{(\ell)}_1,\ldots\mathbf{x}^{(\ell)}_{F_\ell}$ as:
\begin{equation}\label{eqn: single hidden layer}
    \mathbf{x}^{(\ell)}_j = \sigma \left(\sum_{i=1}^{F_{\ell-1}} \mathbf{Y}_{ij}^{(\ell)} \mathbf{x}^{(\ell-1)}_i\right).
\end{equation}
In matrix form we write 
    $\mathbf{X}^{(\ell)} = \mathbf{Z}^{(\ell)} \left(\mathbf{X}^{(\ell-1)}\right) $,
where $\mathbf{Z}^{(\ell)}$ is a hidden layer of the form \eqref{eqn: single hidden layer}. 

After the  convolutional layers, we unwind the complex $N\times F_L$ matrix $\mathbf{X}^{(L)}$ into a real-valued $N\times 2F_L$ matrix, apply a linear layer, consisting of right-multiplication by a $2F_L \times n_c$ weight matrix  $\mathbf{W}^{(L+1)}$ (where $n_c$ is the number of classes) and apply softmax. In our experiments, we set $L=2$ or $3$.  When $L=2$, our network applied to node classification, as illustrated in Figure \ref{fig: magnet}, is given by 
\begin{equation*}
    \text{softmax}(\text{unwind}(\mathbf{Z}^{(2)}(\mathbf{Z}^{(1)}(\mathbf{X}^{(0)}))) \mathbf{W}^{\mathrm{(3)}}) \, .
\end{equation*}
For link-prediction, we apply the same method through the unwind layer, and then concatenate the rows corresponding to pairs of nodes to obtain the edge features. 

\begin{figure}
    \centering
    \includegraphics[width=0.86\textwidth]{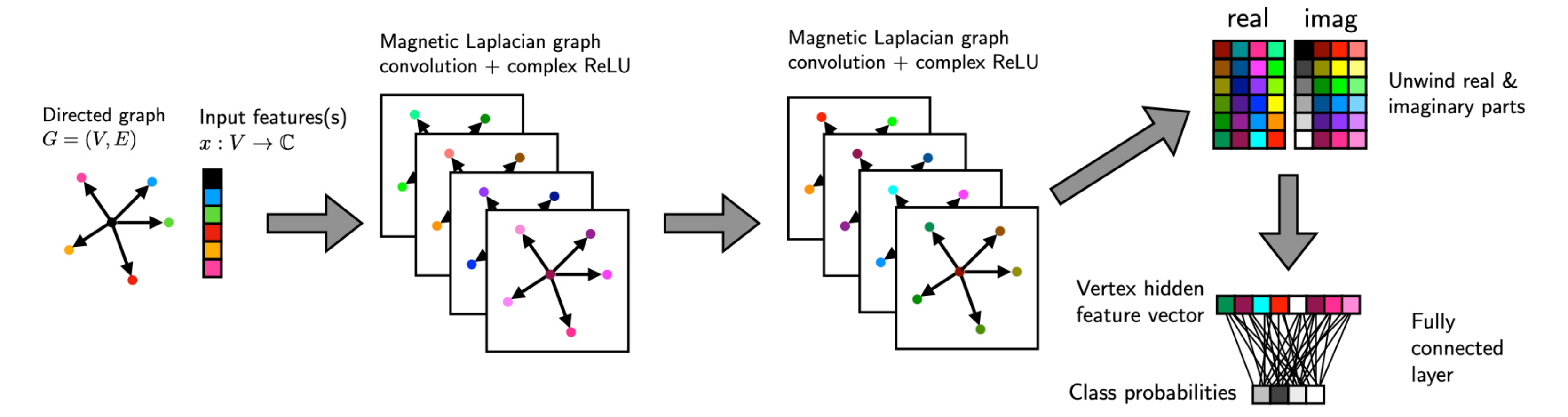}
    \caption{MagNet ($L=2$) applied to node classification. After two complex convolutional layers, we unwind the real and imaginary parts of our feature matrix and apply a fully connected layer.
    }
    \label{fig: magnet}
\end{figure}

\section{Related work}\label{sec: related}

In Section \ref{sec: other directed graph networks}, we describe other graph neural networks designed specifically for directed graphs. Notably, none of these methods encode directionality with complex numbers, instead opting for real-valued, symmetric matrices. 
In Section \ref{sec: related MagNet}, we review other work studying the magnetic Laplacian which has been studied for several decades and lately has garnered interest in the network science and graph signal processing communities. However, to the best of our knowledge, this is the first work to use it to construct a graph neural network. 
We also note there are numerous approaches to graph signal processing on directed graphs. Many of these rely on a natural analog of Fourier modes. These Fourier modes are typically defined through either a factorization of a graph shift operator or by solving an optimization problem. For further review, we refer the reader to \cite{Marques2020}.

\subsection{Neural networks for directed graphs}
\label{sec: other directed graph networks}
In \cite{ma:spectralDGCN2019}, the authors construct a 
directed Laplacian,
via identities involving the random walk matrix 
and its stationary distribution $\bPi$.
When $G$ is  undirected, one can use the fact that $\bPi$ is proportional to the degree vector to verify this directed Laplacian reduces to the standard normalized graph Laplacian. However, this method requires $G$ to be strongly connected, unlike MagNet. 
The authors of \cite{tong:directedGCN2020} use a first-order proximity matrix $\bA_F$ (equivalent to $\bA_s$ here), as well as two  second-order proximity matrices $\bA_{S_{\text{in}}}$ and $\bA_{S_{\text{out}}}$. $\bA_{S_\text{in}}$ is defined by $\bA_{S_\text{in}}(u,v)\neq 0$ if there exists a $w$ such that $(w,u),(w,v)\in E$, and  
$\bA_{S_{\text{out}}}$ is defined analogously. These three matrices collectively describe and distinguish the neighborhood of each vertex and those vertices that can reach a vertex in a single hop. 
The authors construct three different Laplacians and use a fusion operator to share information across channels. Similarly, inspired by \cite{benson2016higher}, in \cite{monti:MotifNet2018}, the authors consider several different symmetric Laplacian matrices corresponding to a number of different graph motifs.

The method of \cite{Tong2020DigraphIC} builds upon the ideas of both \cite{ma:spectralDGCN2019} and \cite{tong:directedGCN2020} and considers a directed Laplacian similar to the one used in \cite{ma:spectralDGCN2019}, but with a PageRank matrix in place of the random-walk matrix. This  allows for applications to graphs which are not strongly connected. Similar to \cite{tong:directedGCN2020}, they use higher-order receptive fields (analogous to the second-order adjacency matrices discussed above) and an inception module to share information between receptive fields of different orders. We also note \cite{klicpera2018predict}, which uses an approach based on PageRank in the spatial domain.

\subsection{Related work on the magnetic Laplacian and Hermitian adjacency matrices}
\label{sec: related MagNet}

The magnetic Laplacian has been studied since at least \cite{lieb1993fluxes}. The name originates from its interpretation as a quantum mechanical Hamiltonian of a particle under magnetic flux. Early works focused on $d$-regular graphs, where the eigenvectors of the magnetic Laplacian are equivalent to those of the Hermitian adjacency matrix. The authors of \cite{krystal:hermitianAdjDigraphs2017}, for example, show that using a complex-valued Hermitian adjacency matrix rather than the symmetrized adjacency matrix reduces the number of small, non-isomorphic cospectral graphs. 
Topics of current research into Hermitian adjacency matrices include clustering tasks  \cite{cucuringu2020hermitian} and the role of the parameter $q$ \cite{mohar2020new}.
	
The magnetic Laplacian is also the subject of ongoing research in graph signal processing  \cite{furutani:GSPdirectedHermit2019}, community detection  \cite{fanuel2017Magnetic}, and clustering \cite{CLONINGER2017370, fanuel:magneticEigenmaps2018, f2020characterization}. For example, \cite{fanuel:magneticEigenmaps2018} uses the phase of the eigenvectors to construct eigenmap embeddings analogous to \cite{belkin2003laplacian}. 
The role of $q$ is highlighted in the works of \cite{fanuel:magneticEigenmaps2018, fanuel2017Magnetic, krystal:hermitianAdjDigraphs2017, mohar2020new}, which show how 
particular choices of $q$ may highlight various graph motifs. In our context, this indicates that 
$q$ 
should be carefully tuned via cross-validation. Lastly, we note that numerous other directed graph Laplacians have been studied and applied to data science \cite{chung2005Laplacians,chung2013local, Palmer2021}. However, as alluded to in Section \ref{sec: mag}, these methods typically do not use complex Hermitian matrices.

\section{Numerical experiments}
\label{sec: experiments}

We test the performance of MagNet for node classification and link prediction on a variety of benchmark datasets as well as a directed stochastic block model. 

\subsection{Datasets}

\subsubsection{Directed Stochastic Block Model}
\label{sec: dsbm}

\begin{figure}[t]
    \centerline{
    \begin{subfigure}{0.245\textwidth}  
        \centering
        \includegraphics[width=1.11\linewidth,trim=0cm 0cm 0cm 1.5cm,clip]{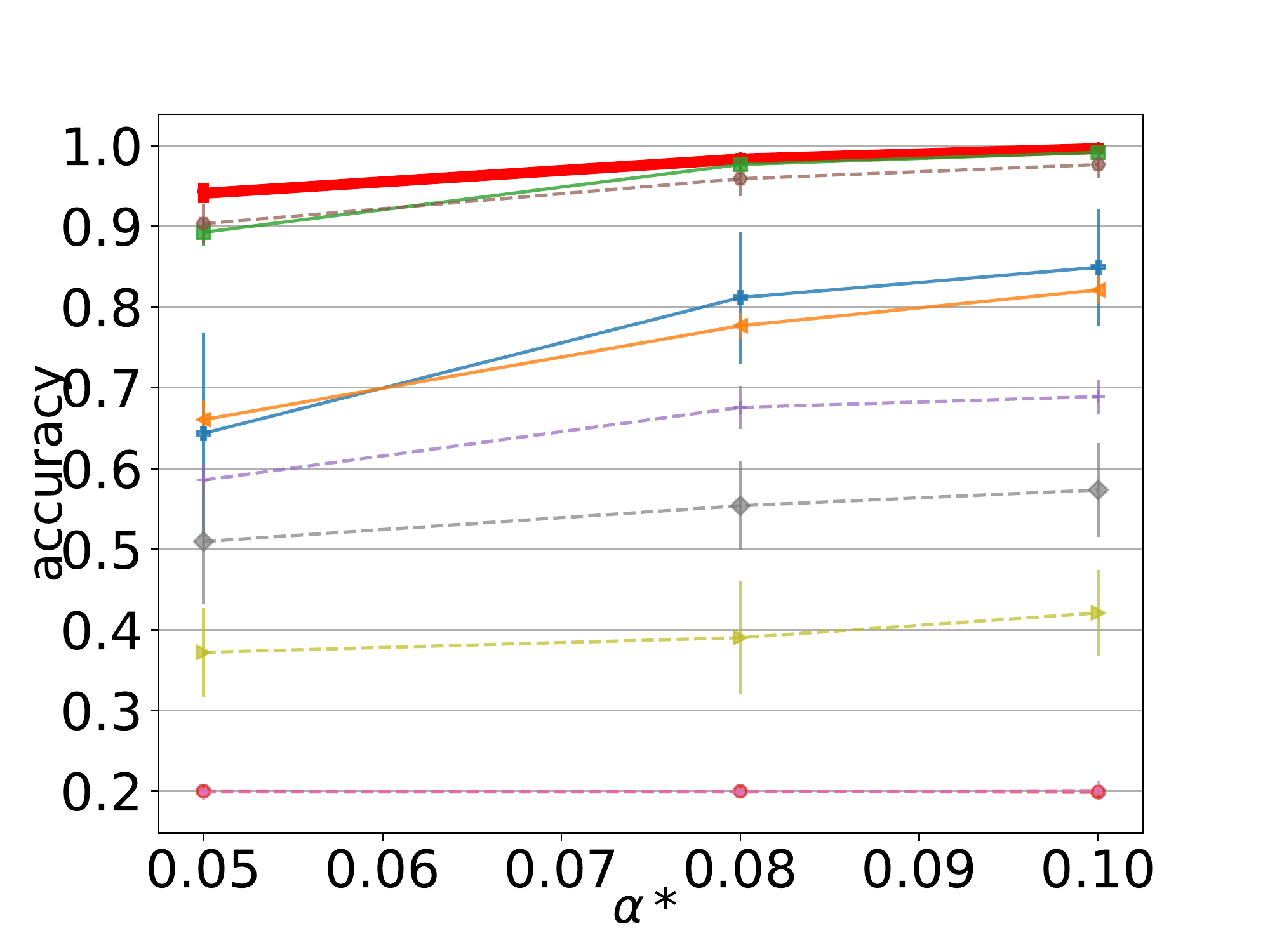} \captionsetup{width=1.0\linewidth}
        \caption{Ordered DSBM with varying edge density.} \label{fig: varying alpha}
    \end{subfigure}
    \begin{subfigure}{0.245\textwidth}
        \centering
        \includegraphics[width=1.11\linewidth,trim=0cm 0cm 0cm 1.5cm,clip]{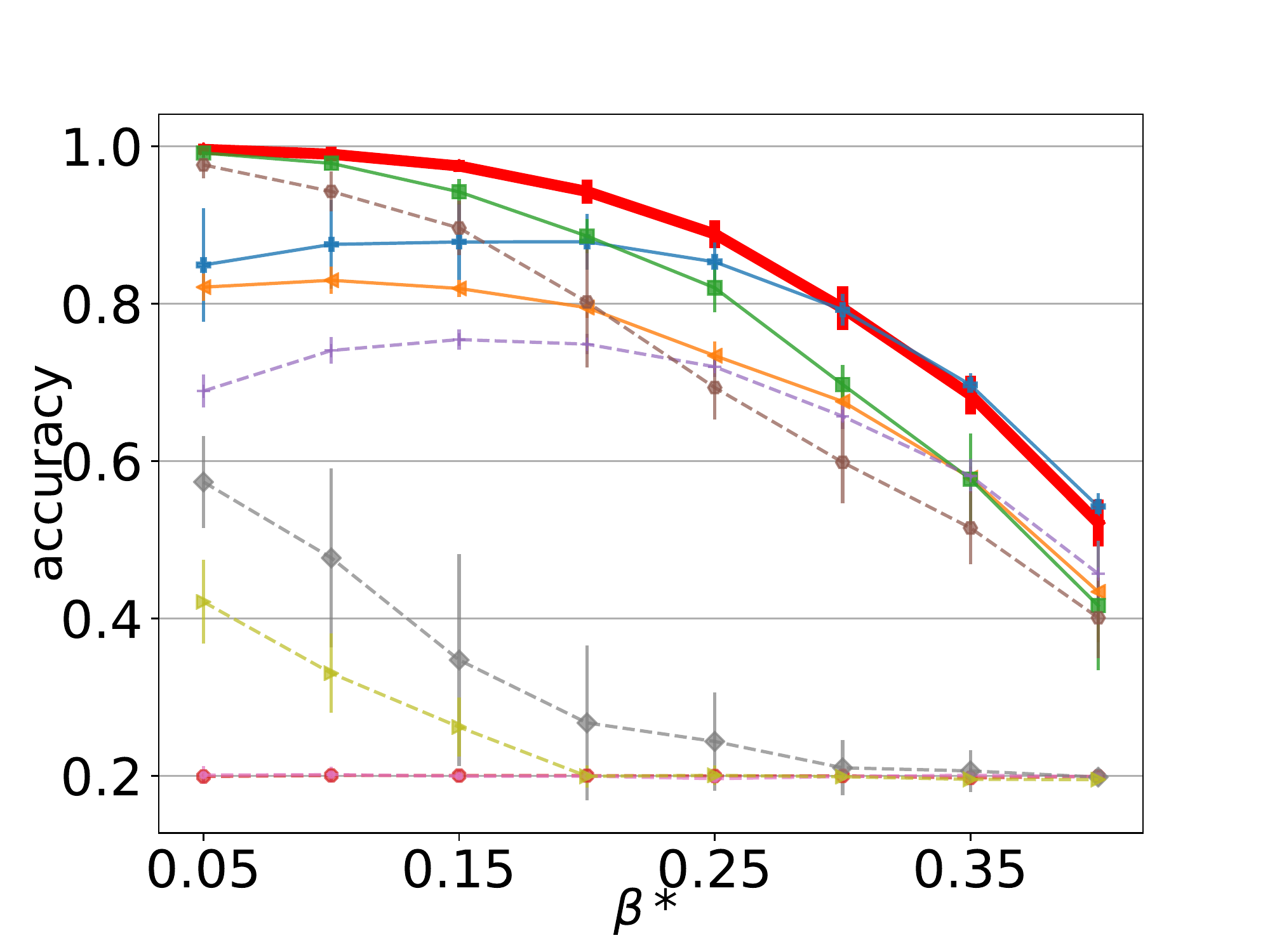} \captionsetup{width=1.0\linewidth}
        \caption{Ordered DSBM with varying net flow.}\label{fig: ordereddbsm}
    \end{subfigure}
    \begin{subfigure}{0.245\textwidth}
        \centering
        \includegraphics[width=1.11\linewidth,trim=0cm 0cm 0cm 1.5cm,clip]{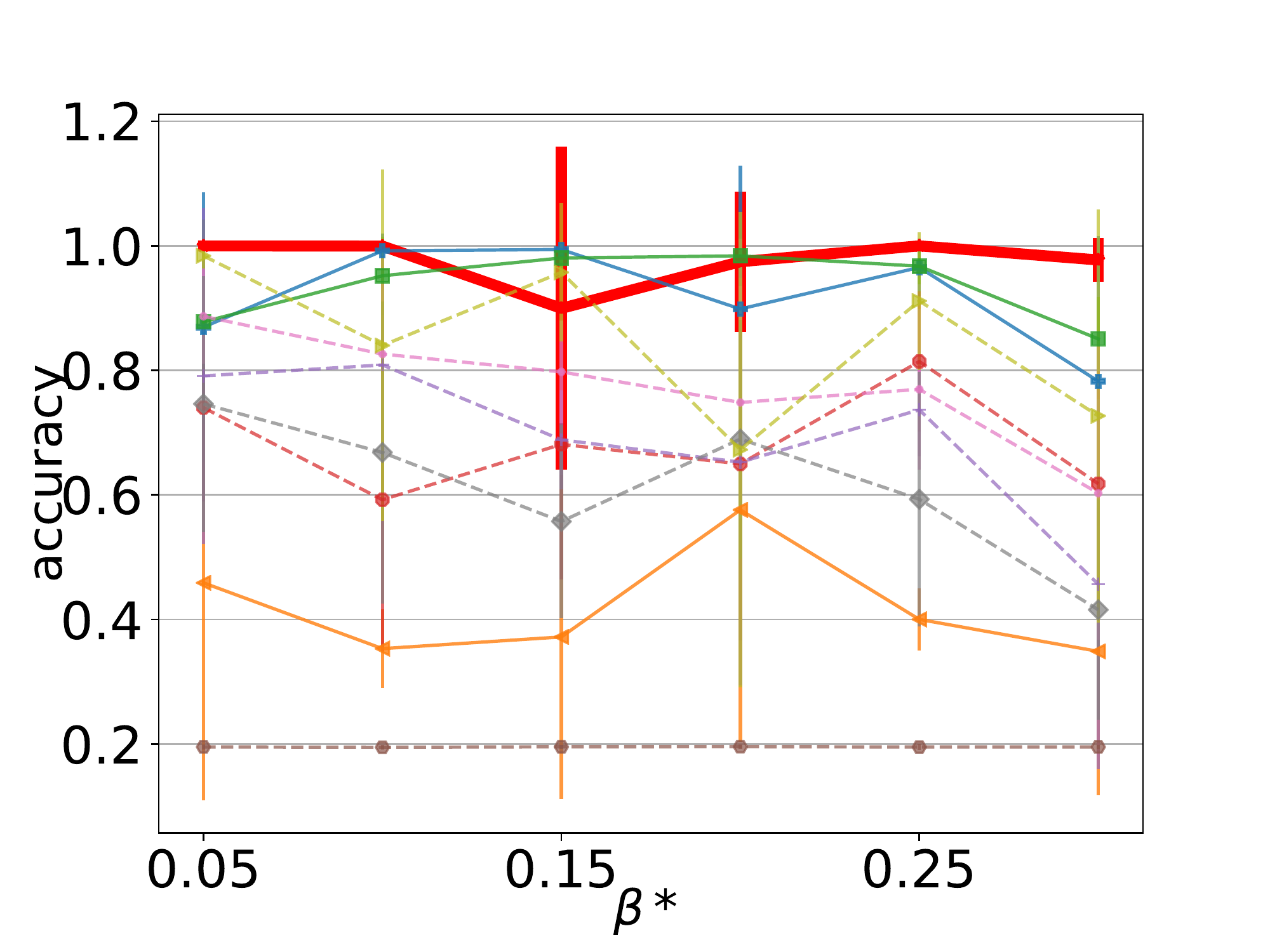} \captionsetup{width=1.0\linewidth}
        \caption{Cyclic DSBM with varying net flow.}\label{fig: cyclicdbsm}
    \end{subfigure}
    \begin{subfigure}{0.245\textwidth}
        \centering
        \includegraphics[width=1.11\linewidth,trim=0cm 0cm 0cm 1.5cm,clip]{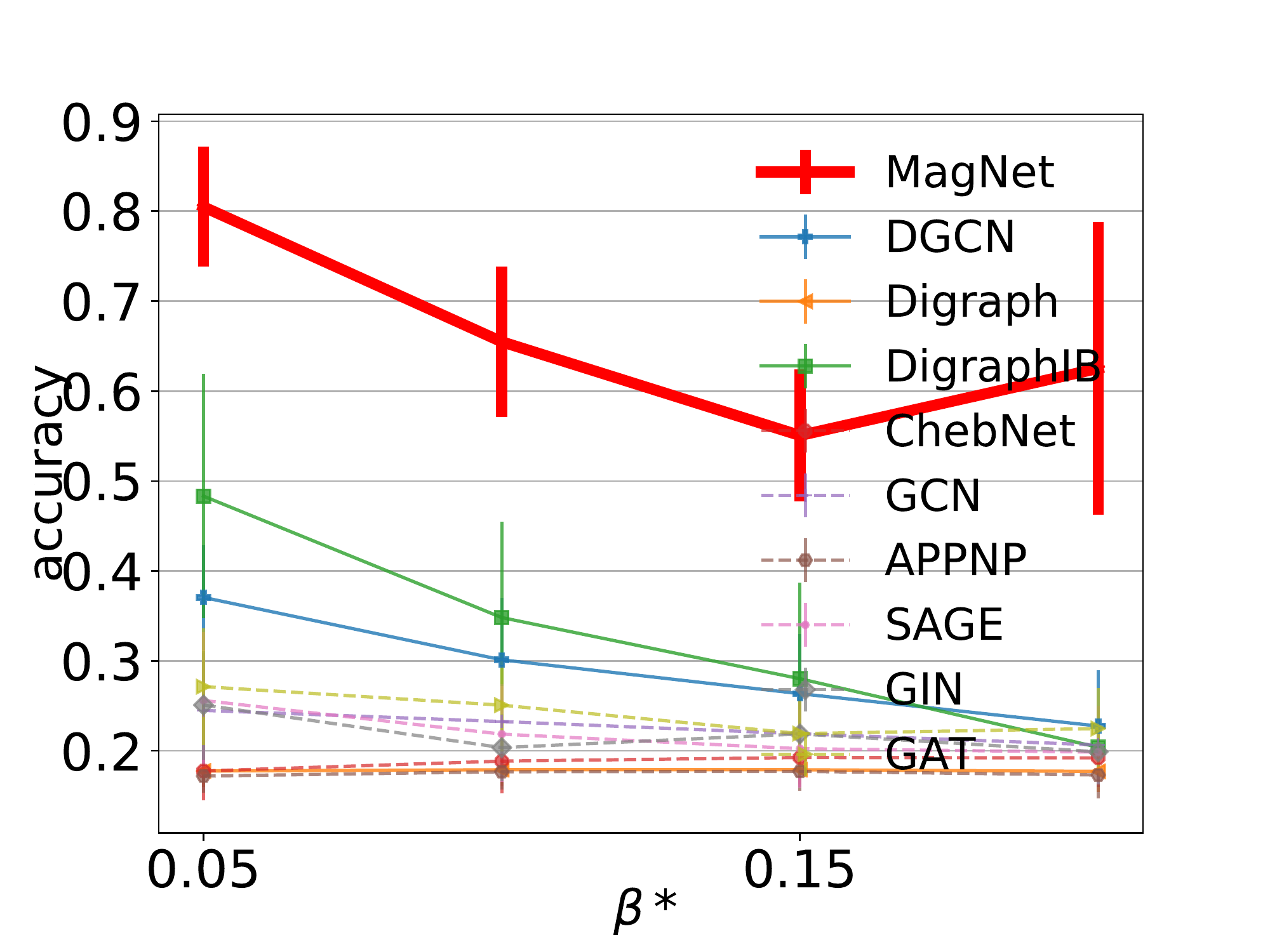} \captionsetup{width=1.0\linewidth}
        \caption{Noisy Cyclic DSBM with varying net flow.}
        \label{fig: noisycyclic}
    \end{subfigure}
    }
    \caption{Node classification accuracy. Error bars are one standard error. MagNet is bold red.}
    \label{fig:synthetic_plots}
\end{figure}

\begin{wrapfigure}{R}{.28\textwidth}
    \begin{minipage}
    {\linewidth}
    \centering\captionsetup[subfigure]{justification=centering}
    \includegraphics[width=\linewidth]{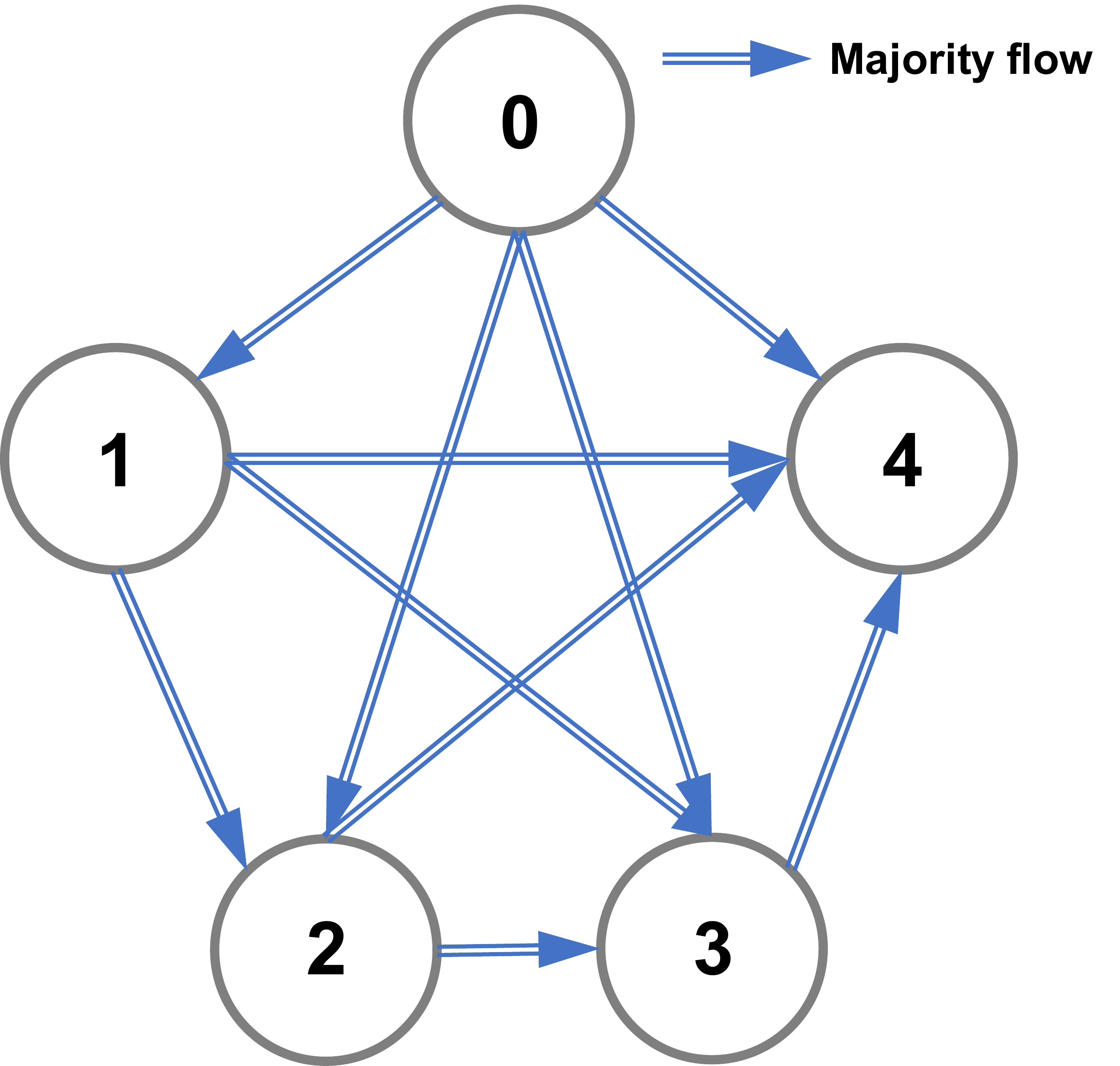}
    \subcaption{Ordered meta-graph.}
    \label{fig:completegraph}
    \includegraphics[width=\linewidth]{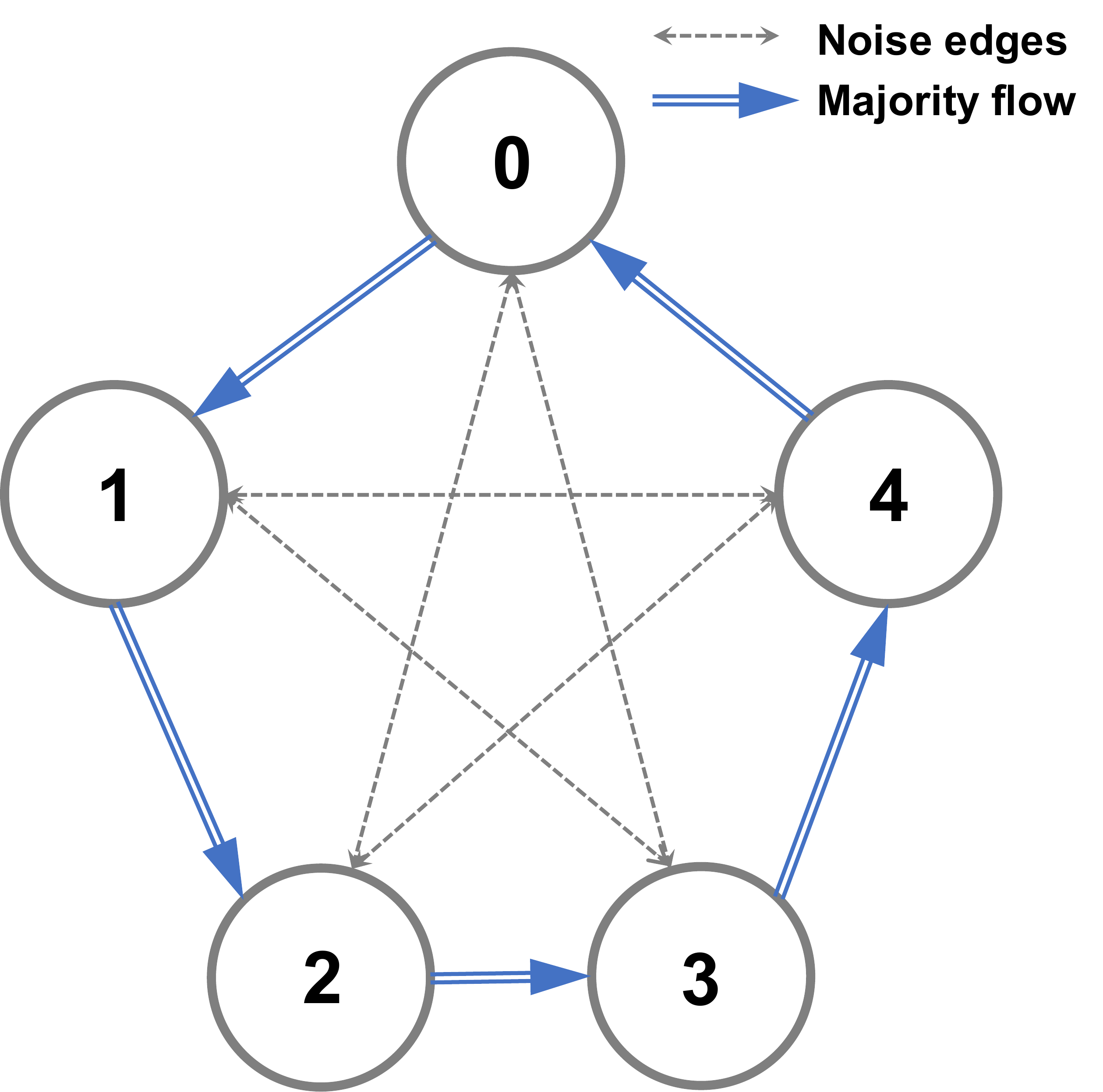}
    \subcaption{Cyclic meta-graph.}
    \label{fig:cyclicgraph}
    \end{minipage}
    \caption{Meta-graphs for the synthetic data sets.}
    \label{fig:synvis}
    \vskip -15pt
\end{wrapfigure}

We construct a directed stochastic block (DSBM) model as follows. First we divide $N$ vertices into $n_c$ equally-sized clusters $C_1,\ldots,C_{n_c}$. We define $\{\alpha_{i,j}\}_{1\leq i,j\leq n_c}$ to be a collection of probabilities,  $0<\alpha_{i,j}\leq 1$ with $\alpha_{i,j}=\alpha_{j,i}$, and for an unordered pair $u\neq v$  create an undirected edge between $u$ and $v$ with probability $\alpha_{i,j}$ if $u\in C_i,v\in C_j$. To turn this undirected graph into a directed graph, we next define $\{\beta_{i,j}\}_{1\leq i,j\leq n_c}$ to be a collection of probabilities such that $0\leq\beta_{i,j}\leq 1$ and $\beta_{i,j}+\beta_{j,i}=1$. For each undirected edge $\{u,v\}$, we assign that edge a direction by the rule that the edge points from $u$ to $v$ with probability $\beta_{i,j}$ if $u\in C_i$ and $v\in C_j$, and  points from $v$ to $u$ otherwise. We note that if $\alpha_{i,j}$ is constant, then the only way to determine the clusters will be from the directional information.  

In Figure \ref{fig:synthetic_plots}, we plot the performance of MagNet and other methods on variations of the DSBM. In each of these, we set $n_c=5$ and the goal is to classify the vertices by cluster. We set  $N=2500$, except in Figure \ref{fig: noisycyclic} where $N=500$. In Figure \ref{fig: varying alpha}, we plot the performance of our model on the DSBM with 
$\alpha_{i,j}\coloneqq \alpha^*=.1, .08$, and $.05$  for $i\neq j$, which varies the density of inter-cluster edges,  and set $\alpha_{i,i}=.1$. Here we set $\beta_{i,i}=.5$ and $\beta_{i,j}=.05$ for $i > j$. This  corresponds to the ordered meta-graph in Figure \ref{fig:completegraph}. Figure \ref{fig: ordereddbsm} also uses the ordered meta-graph, but here we fix $\alpha_{i,j}=.1$ for all $i,j$, and set $\beta_{i,j}=\beta^*,$ for $i>j$, and allow $\beta^*$ to vary from .05 to .4, which  varies the net flow from one cluster to another. The results in Figure \ref{fig: cyclicdbsm} utilize a cyclic meta-graph structure as in Figure \ref{fig:cyclicgraph} (without the gray noise edges). Specifically, we set  $\alpha_{i,j}=.1$ if $i=j$ or $i=j\pm1\mod 5$ and $\alpha_{i,j}=0$ otherwise. We define $\beta_{i,j}=\beta^*$,  $\beta_{j,i}=1-\beta^*$ when $j=(i-1)\mod 5$, and $\beta_{i,j}=0$ otherwise. In Figure \ref{fig: noisycyclic} we add noise to the cyclic structure of our meta-graph by setting $\alpha_{i,j}=.1$ for all $i,j$ and $\beta_{i,j} = .5$ for all $(i,j)$ connected by a gray edge in Figure \ref{fig:cyclicgraph} (keeping $\beta_{i,j}$ the same as in Figure \ref{fig: cyclicdbsm} for the blue edges).

\subsubsection{Real datasets}

\textit{Texas}, \textit{Wisconsin}, and \textit{Cornell} are WebKB datasets modeling links between websites at different universities \cite{pei2020geom}. We use these datasets for both link prediction and node classification with nodes labeled as student, project, course, staff, and faculty in the latter case. 
\textit{Telegram}  \cite{bovet2020activity} is a pairwise influence network between $245$ Telegram channels with $8,912$ links. To the best of our knowledge, this dataset has not previously been studied in the graph neural network literature.
Labels are generated from the method discussed in \cite{bovet2020activity}, with a total of four classes. 
The datasets \textit{Chameleon} and \textit{Squirrel}   \cite{rozemberczki2019multiscale}  represent links between Wikipedia pages related to chameleons and squirrels. We use these datasets for link prediction. Likewise, \textit{WikiCS} \cite{mernyei2020wiki} is a collection of Computer Science articles, which we also use for link prediction (see the tables in Appendix \ref{sec:tables}).
\textit{Cora-ML} and \textit{CiteSeer} are popular citation networks with node labels corresponding to scientific subareas. We use the versions of these datasets provided in \cite{bojchevski2017deep}. Further details are given in the Appendix \ref{sec: datasets}.

\subsection{Training and implementation details}

Node classification is performed in a semi-supervised setting (i.e., access to the test data, but not the test labels, during training). For the datasets \textit{Cornell}, \textit{Texas}, \textit{Wisconsin}, and \textit{Telegram} we use a 60\%/20\%/20\% training/validation/test split, which might be viewed as more akin to supervised learning, because of the small graph size. For \textit{Cora-ML} and \textit{CiteSeer}, we use the same split as \cite{Tong2020DigraphIC}. For all of these datasets we use 10 random data splits. For the DSBM datasets, we generated 5 graphs randomly for each type and for each set of parameters, each with 10 different random node splits. We use $20\%$ of the nodes for validation and we vary the proportion of training samples based on the classification difficulty, using 2\%, 10\%, and 60\% of nodes per class for the ordered, cyclic, and noisy cyclic DSBM graphs, respectively, during training, and the rest for testing. Hyperpameters were selected using one of the five generated graphs, and then applied to the other four generated graphs.

In the main text, there are two types of link prediction tasks conducted for performance evaluation. The first type is to predict the edge direction of pairs of vertices $u,v$ for which either $(u,v) \in E$ or $(v,u) \in E$. The second type is existence prediction. The model is asked to predict if $(u,v) \in E$ by considering ordered pairs of vertices $(u,v)$. For both types of link prediction, we removed 15\% of edges for testing, 5\% for validation, and use the rest of the edges for training. The connectivity was maintained during splitting. 10 splits were generated randomly for each graph and the input features are in-degree and out-degree of nodes. In the appendix, we report on two additional link prediction tasks based on a three-class classification setup: $(u,v) \in E$, $(v,u) \in E$, or $(u,v), (v,u) \notin E$. Full details are provided in Appendix \ref{sec: datasets}.

In all experiments, we used the normalized magnetic Laplacian and  implement MagNet with convolution defined as in \eqref{eqn: ChebNet}, meaning that our network may be viewed as the magnetic Laplacian generalization of ChebNet. We compare with multiple baselines in three categories: (i) spectral methods: ChebNet \cite{Defferrard2018}, GCN \cite{kipf2016semi}; (ii) spatial methods: APPNP \cite{klicpera2018predict}, SAGE \cite{hamilton2017inductive}, GIN \cite{xu2018powerful},  GAT \cite{velivckovic2017graph}; and (iii) methods designed for directed graphs: DGCN \cite{tong:directedGCN2020}, and two variants of  \cite{Tong2020DigraphIC}, a basic version (DiGraph) and a version with higher order inception blocks (DiGraphIB). All baselines in the experiments have two graph convolutional layers, except for the node classification on the DSBM using the cyclic meta-graphs (Figures \ref{fig: cyclicdbsm}, \ref{fig: noisycyclic}, and \ref{fig:cyclicgraph}) for which we also tested three layers during the grid search. 
For ChebNet, we use the symmetrized adjacency matrix. 
For the spatial networks we apply both the symmetrized and asymmmetric adjacency matrix for node classification. The results reported are the better of the two results. Full details are provided in the Appendix \ref{sec:trainingdetails}.

\begin{table*}
\centering
\caption{Node classification accuracy (\%). The best results are in \textbf{bold} and the second are \uline{underlined}.}
\setlength\tabcolsep{3.8 pt}
\begin{tabular}{c c H H H c c c c c c c} 
        \toprule
        Type & Method & Syn1 & Syn2 & Syn3 & Cornell & Texas & Wisconsin & Cora-ML & CiteSeer & Telegram  \\ 
 \midrule
\multirow{2}{*}{Spectral}&ChebNet&19.8$\pm$0.3&20.1$\pm$0.6&20.2$\pm$0.8&79.8$\pm$5.0&79.2$\pm$7.5&81.6$\pm$6.3&80.0$\pm$1.8&66.7$\pm$1.6&70.2 $\pm$6.8\\
&GCN  &-&-&-&59.0$\pm$6.4&58.7$\pm$3.8&55.9$\pm$5.4&82.0$\pm$1.1&66.0$\pm$1.5&73.4 $\pm$5.8\\ \midrule
\multirow{4}{*}{Spatial}&APPNP&-&-&-&58.7$\pm$4.0&57.0$\pm$4.8&51.8$\pm$7.4&\textbf{82.6$\pm$1.4}&66.9$\pm$1.8&67.3 $\pm$3.0\\
&SAGE&-&-&-&\uline{80.0$\pm$6.1}&\textbf{84.3$\pm$5.5}&\uline{83.1$\pm$4.8}&\uline{82.3$\pm$1.2}&66.0$\pm$1.5&56.6 $\pm$6.0\\
&GIN&-&-&-&57.9$\pm$5.7&65.2$\pm$6.5&58.2$\pm$5.1&78.1$\pm$2.0&63.3$\pm$2.5&74.4 $\pm$8.1\\
&GAT&-&-&-&57.6$\pm$4.9&61.1$\pm$5.0&54.1$\pm$4.2&81.9$\pm$1.0&\uline{67.3$\pm$1.3}&72.6 $\pm$7.5\\
\midrule
\multirow{3}{*}{Directed}&DGCN&\uline{99.1$\pm$0.2}&\uline{97.3$\pm$0.3}&88.6$\pm$1.1&67.3$\pm$4.3&71.7$\pm$7.4&65.5$\pm$4.7&81.3$\pm$1.4&66.3$\pm$2.0&\textbf{90.4 $\pm$5.6}\\
&Digraph&80.7$\pm$2.2&78.1$\pm$0.8&66.5$\pm$1.1&66.8$\pm$6.2&64.9$\pm$8.1&59.6$\pm$3.8&79.4$\pm$1.8&62.6$\pm$2.2&82.0  $\pm$3.1\\
&DiGraphIB & & & & 64.4$\pm$9.0& 64.9$\pm$13.7 & 64.1$\pm$7.0 & 79.3$\pm$ 1.2 & 61.1$\pm$1.7 & 64.1$\pm$7.0\\\midrule
\multirow{2}{*}{This paper} &MagNet&\textbf{99.8$\pm$0.1}&\textbf{98.4$\pm$0.5}&\textbf{93.8$\pm$1.5}&\textbf{84.3$\pm$7.0}&\uline{83.3$\pm$6.1}&\textbf{85.7$\pm$3.2}&79.8$\pm$2.5&\textbf{67.5$\pm$1.8} &\uline{87.6 $\pm$2.9}\\ 
\cmidrule{3-11}
        & Best $q$ & x & x & x & 0.25 & 0.15 & 0.05 & 0.0 & 0.0 & 0.15\\
\bottomrule
\end{tabular}
\label{table:class}
\end{table*}

\subsection{Results}

We see that MagNet  performs well across all tasks.  As indicated in Table \ref{table:class}, our cross-validation procedure selects $q=0$  for  node classification on the citation networks \textit{Cora-ML} and \textit{CiteSeer}. This means we achieved the best performance when regarding directional information as noise, suggesting symmetrization-based methods are appropriate in the context of node classification on citation networks. This matches our intuition. For example, in \textit{Cora-ML}, the task is to classify research papers by scientific subarea. If the topic of a given paper is ``machine learning,'' then it is likely to both cite and be cited by other machine learning papers.  For all other datasets, we find the optimal value of $q$ is nonzero, indicating that directional information is important. Our network exhibits the best performance on three out of six of these datasets and is a close second on \textit{Texas} and \textit{Telegram}. We also achieve an at least  four percentage point improvement over both ChebNet and GCN on the four data sets for which $q > 0$. These networks are similar to ours but with the classical graph Laplacian. This isolates the effects of the magnetic Laplacian and shows that it is a valuable tool for encoding directional information.  
MagNet also compares favorably to non-spectral methods on the WebKB networks (\textit{Cornell}, \textit{Texas}, \textit{Wisconsin}).
Indeed, MagNet obtains a $\sim4 $\% improvement on \textit{Cornell} and a $\sim 2.5$\% improvement on \textit{Wisconsin}, while on \textit{Texas} it has the second best accuracy, close behind SAGE.
We also see the other directed methods have relatively poor performance on the WebKB networks, perhaps since these graphs are fairly small and have very few training samples. 

On the DSBM datasets, as illustrated in Figure \ref{fig:synthetic_plots} (see also the tables in Appendix \ref{sec:tables}), we see that MagNet generally performs quite well and is the best performing network in the vast majority of cases (for full details, see  Appendix \ref{sec:tables}). The networks DGCN and DiGraphIB rely on second order proximity matrices. As demonstrated in Figure \ref{fig: cyclicdbsm}, these methods are well suited for networks with a cyclic meta-graph structure since nodes in the same cluster are likely to have common neighbors. MagNet, on the other hand, does not use second-order proximity, but can still learn the clusters by stacking multiple layers together. This improves MagNet's ability to adapt to directed graphs with different underlying topologies.  
This is illustrated in Figure \ref{fig: noisycyclic} where the network has an approximately cyclic meta-graph structure. In this setting, MagNet continues to perform well, but the performance of DGCN and DiGraphIB deteriorate significantly. Interestingly, MagNet performs well on the DSBM cyclic meta-graph (Figure \ref{fig: cyclicdbsm}) with $q \approx .1$, whereas $q \geq .2$ is preferred for the other three DSBM tests; we leave a more in-depth investigation for future work. Further details are available in Appendix \ref{sec: bestq}. 

\begin{table}[t]
    \caption{Link prediction accuracy (\%). The best results are in \textbf{bold} and the second are \uline{underlined}.}
    \setlength\tabcolsep{2.8pt}
    \centerline{
    \begin{tabular}{c c H c c c c H c c c} 
        \toprule
&\multicolumn{5}{c}{Direction prediction} & \multicolumn{5}{c}{Existence prediction}\\
\cmidrule(lr){2-6} 
\cmidrule(lr){7-11}
&Cornell &Texas &Wisconsin &Cora-ML &CiteSeer&Cornell &Texas &Wisconsin &Cora-ML &CiteSeer \\\midrule
ChebNet&71.0$\pm$5.5&66.8$\pm$6.9&67.5$\pm$4.5&72.7$\pm$1.5&68.0$\pm$1.6&80.1$\pm$2.3&81.7$\pm$2.7&82.5$\pm$1.9&80.0$\pm$0.6&77.4$\pm$0.4\\
GCN&56.2$\pm$8.7&69.8$\pm$4.9&71.0$\pm$4.0&79.8$\pm$1.1&68.9$\pm$2.8&75.1$\pm$1.4&76.1$\pm$3.0&75.1$\pm$1.9&81.6$\pm$0.5&76.9$\pm$0.5\\\midrule
APPNP&69.5$\pm$9.0&76.8$\pm$5.1&75.1$\pm$3.5&\uline{83.7$\pm$0.7}&77.9$\pm$1.6&74.9$\pm$1.5&76.4$\pm$2.5&75.7$\pm$2.2&\uline{82.5$\pm$0.6}&78.6$\pm$0.7\\
SAGE&75.2$\pm$11.0&69.8$\pm$5.9&72.0$\pm$3.5&68.2$\pm$0.8&68.7$\pm$1.5&79.8$\pm$2.4&75.2$\pm$3.1&77.3$\pm$2.9&75.0$\pm$0.0&74.1$\pm$1.0\\
GIN&69.3$\pm$6.0&76.1$\pm$4.5&74.8$\pm$3.7&83.2$\pm$0.9&76.3$\pm$1.4&74.5$\pm$2.1&77.5$\pm$3.8&76.2$\pm$1.9&\uline{82.5$\pm$0.7}&77.9$\pm$0.7\\
GAT&67.9$\pm$11.1&50.0$\pm$2.0&53.2$\pm$2.6&50.0$\pm$0.1&50.6$\pm$0.5&77.9$\pm$3.2&74.9$\pm$0.3&74.6$\pm$0.0&75.0$\pm$0.0&75.0$\pm$0.0\\\midrule
DGCN&\textbf{80.7$\pm$6.3}&72.5$\pm$8.0&74.5$\pm$7.2&79.6$\pm$1.5&78.5$\pm$2.3&80.0$\pm$3.9&82.3$\pm$3.1&\uline{82.8$\pm$2.0}&82.1$\pm$0.5&\uline{81.2$\pm$0.4}\\
DiGraph&79.3$\pm$1.9&79.8$\pm$3.0&\uline{82.3$\pm$4.9}&80.8$\pm$1.1&81.0$\pm$1.1&\textbf{80.6$\pm$2.5}&82.8$\pm$2.5&\uline{82.8$\pm$2.6}&81.8$\pm$0.5&\textbf{82.2$\pm$0.6}\\
DiGraphIB&79.8$\pm$4.8&81.1$\pm$2.5&82.0$\pm$4.9&83.4$\pm$1.1&\uline{82.5$\pm$1.3}&\uline{80.5$\pm$3.6}&83.6$\pm$2.6&82.4$\pm$2.2&82.2$\pm$0.5&81.0$\pm$0.5\\\midrule
MagNet&\textbf{80.7$\pm$2.7}&79.5$\pm$3.7&\textbf{83.6$\pm$2.8}&\textbf{86.1$\pm$0.9}&\textbf{85.1$\pm$0.8}&\textbf{80.6$\pm$3.8}&83.8$\pm$3.3&\textbf{82.9$\pm$2.6}&\textbf{82.8$\pm$0.7}&79.9$\pm$0.5\\
\cmidrule{2-11}
Best $q$ & 0.10 & 0.10 & 0.05 & 0.05 & 0.15 & 0.25 & 0.10 & 0.25 & 0.05 & 0.05\\[0.5ex]
\bottomrule
\end{tabular}
}
\label{table:link_pred}
\end{table}

For link prediction, we achieve the best performance on seven out of eight tests 
as shown in  Table \ref{table:link_pred}. 
We also note that Table \ref{table:link_pred} reports optimal non-zero $q$ values for each task. This indicates that incorporating directional information is important for link prediction, even on citation networks such as \textit{Cora} and \textit{CiteSeer}.  This matches our intuition, since there is a clear difference between a paper with many citations and one with many references. 
More results on different datasets, and closely related tasks (including a three-class classification problem), are provided in Appendix \ref{sec:tables}.

\section{Conclusion}
\label{sec: conclusion}

We have introduced MagNet, a neural network for directed graphs based on the magnetic Laplacian. This network can be viewed as the natural extension of spectral graph convolutional networks to the directed graph setting. We demonstrate the effectiveness of our network, and the importance of incorporating directional information via a complex Hermitian matrix, for link prediction and node classification on both real and synthetic datasets. Interesting avenues of future research would be using multiple $q$'s along different channels, exploring the role of different normalizations of the magnetic Laplacian, and incorporating the magnetic Laplacian into other network architectures. 


\noindent\textbf{Limitations and Ethical Considerations:} Our method has natural extensions to weighted, directed graphs when all edges are directed. However, it not clear what is the best way to extend it to weighted mixed graphs (with both directed and undirected edges). Our network does not incorporate an attention mechanism and, similar to many other networks, is not scalable to large graphs in its current form (although this may be addressed in future work). All of our data is publicly available for research purposes and does not contain personally identifiable information or offensive content. The method presented here has no greater or lesser impact on society than other graph neural network algorithms.

\section*{Acknowledgments}

We would like to thank Jie Zhang who pointed out that our definition of the magnetic Laplacian differed by an entry-wise complex conjugation from the most commonly used definition in the literature. Y.H. thanks her supervisors Mihai Cucuringu and Gesine Reinert for their guidance.

This work was supported by the National Institutes of Health [grant NIGMS-R01GM135929 to M.H. and supporting X.Z, N.B.]; the University of Oxford [the Clarendon scholarship to Y.H.]; the National Science Foundation [grants DMS-1845856 and DMS-1912906 to M.H.]; and the Department of Energy [grant DE-SC0021152 to M.H.].

\FloatBarrier

\appendix 
\section{Github repository} 
\label{gen_inst}

A Github repository containing code needed to reproduce the results is \url{https://github.com/matthew-hirn/magnet}.

\section{List of method abbreviations}
\label{sec: acronyms}

\begin{multicols}{2}
\begin{itemize}[topsep=0pt, itemsep=0pt]
    \item MagNet (this paper)
    \item ChebNet \cite{Defferrard2018}
    \item GCN \cite{kipf2016semi}
    \item APPNP \cite{klicpera2018predict}
    \item GAT \cite{velivckovic2017graph}
    \item SAGE \cite{hamilton2017inductive}
    \item GIN \cite{xu2018powerful}
    \item DGCN \cite{tong:directedGCN2020}
    \item DiGraph \cite{Tong2020DigraphIC} %
    \item DiGraphIB \cite{Tong2020DigraphIC}: DiGraph with inception blocks
\end{itemize}
\end{multicols}

\section{Further implementation details}\label{sec:trainingdetails}

We set the parameter $K=1$ in our implementation of both ChebNet and MagNet. We train all models with a maximum of $3000$ epochs and stop early if the validation error doesn't decrease after $500$ epochs for both node classification and link prediction tasks. One dropout layer with a probability of $0.5$ is created before the last linear layer. The model is picked with the best validation accuracy during training for testing.  We tune the number of filters in [$16, 32, 48$] for the graph convolutional layers for all models, except DigraphIB, since the inception block has more trainable parameters. 
For node classification, we tune the learning rate in [$1e^{-3}, 5e^{-3}, 1e^{-2}$] for all models. Compared with node classification, the number of available samples for link prediction is much larger. Thus, we set a relatively small learning rate of $1e^{-3}$.

We use Adam as the optimizer and $\ell_2$ regularization with the hyperparameter set as $5e^{-4}$ to avoid overfitting. We post the best testing performance by grid-searching based on validation accuracy. For node classification on the synthetic datasets, we generate a one-dimensional node feature sampled from the standard normal distribution. We use the original features for the other node classification datasets. For link prediction, we use the in-degree and out-degree as the node features for all datasets instead the original features. This allows all models to learn directed information from the adjacency matrix. Our experiments were conducted on 8 compute nodes each with 1 Nvidia Tesla V100 GPU, $120$G RAM, and $32$ Intel Xeon E5-2660 v3 CPUs; as well as on a compute node with 8 Nvidia RTX 8000 GPUs, $1000$GB RAM, and 48 Intel Xeon Silver 4116 CPUs.

Here are implementation details specific to certain methods:
\begin{itemize}[itemsep=0pt]
    \item We set the parameter $\epsilon$ to $0$ in GIN for both tasks.
    \item For GAT, the number of heads tuned is in [$2, 4, 8$].
    \item For APPNP, we set $K=10$ for node classification (following the original paper  \cite{klicpera2018predict}), and search K in [$1,5,10$] for link prediction.
    \item The coefficient $\alpha$ for PageRank-based models (APPNP, DiGraph) is searched in [$0.05, 0.1, 0.15, 0.2$].
    \item For DiGraph, the model includes graph convolutional layers without the high-order approximation and inception module. The high order Laplacian and the inception module is included in DigraphIB.
    \item DigraphIB is a bit different than other networks because it requires generating a three-channel Laplacian tensor. For this network, the number of filters for each channel is searched in [$6, 11, 21$] for node classification and link prediction.
    \item For GCN, the out-degree normalized, directed adjacency matrix, including self-loops is also tried in addition to the symmetrized adjacency matrix for node classification tasks, except for synthetic datasets since symmetrization will break the cluster pattern. 
    \item For other spatial methods, including APPNP, GAT, SAGE, and GIN, we tried both the symmetrized adjacency matrices and the original directed adjacency matrices for node classification tasks except for synthetic datasets.
\end{itemize}

\section{Datasets}\label{sec: datasets}

\subsection{Node classification}

As shown in Table \ref{tab:classdata}, we use six real datasets for node classification. A directed edge is defined as follows. If the edge $(u,v) \in E$ but $(v,u) \notin E$, then $(u,v)$ is a directed edge. If $(u,v) \in E$ and $(v,u)\in E$, then $(u,v)$ and $(v,u)$ are undirected edges (in other words, undirected edges that are not self-loops are counted twice). For the citation datasets, \textit{Cora-ML} and \textit{Citeseer}, we randomly select $20$ nodes in each class for training, $500$ nodes for validation, and the rest for testing following \cite{Tong2020DigraphIC}. For the synthetic datasets (\textit{ordered DSBM graphs}, \textit{cyclic DSBM graphs}, \textit{noisy cyclic DSBM graphs}), we generate a one-dimensional node feature sampled from the standard normal distribution.

Ten folds are generated randomly for each dataset, except for \textit{Cornell}, \textit{Texas} and \textit{Wisconsin}. 
For \textit{Cornell}, \textit{Texas}, and \textit{Wisconsin}, we use the same training, validation, and testing folds as \cite{pei2020geom}.
For \textit{Telegram}, we treat it as a directed, unweighted graph and randomly generate 10 splits for training/validation/testing with 60\%/20\%/20\% of the nodes. The node features are sampled from the normal distribution.

\begin{table*}[h!]
\centering
\caption{Real datasets for node classification.}
\begin{tabular}{l H H H c c c c c c } 
\toprule
& H & H & H & Cornell & Texas & Wisconsin & Cora-ML & Citeseer & Telegram \\  \midrule
$\#$ Nodes    & 2500 & 2500 & 2500 & 183 & 183 & 251 & 2,995 & 3,312 & 245\\
$\#$ Edges    & 312,635 & 261,667 & 186,984 & 295 & 309 & 499 & 8,416 & 4,715 & 8,912\\
\% Directed edges &&&&86.9 &76.6 &77.9 & 93.9 & 95.0 &82.4 \\
$\#$ Features & 1 & 1 & 1 & 1,703 & 1,703 & 1,703 & 2,879 & 3,703 & 1\\
$\#$ Classes  & 5 & 5 & 5 & 5 & 5 & 5 & 7 & 6 & 4\\
\bottomrule
\end{tabular}
\label{tab:classdata}
\end{table*}

\subsection{Link prediction}
\label{sec: link prediction}

We use eight real datasets in link prediction as demonstrated in Table \ref{tab:linkdata}. Instead of using the original features, we use the in-degree and out-degree as the node features in order to allow the models to learn structural information from the adjacency matrix directly. The connectivity is maintained by getting the undirected minimum spanning tree before removing edges for validation and testing. For the results in the main text, undirected edges and, if they exist, pairs of vertices with multiple edges between them, may be placed in the training/validation/testing sets. However, labels that indicate the direction of such edges are not well defined, and therefore can be considered as noisy labels from the machine learning perspective. In order to obtain a full set of well-defined, noiseless labels, we also run experiments in which undirected edges and pairs of vertices with multiple edges between them are ignored when sampling edges for training/validation/testing (in other words, only directed edges, and the absence of an edge, are included). We evaluated all models on four prediction tasks, which we now describe.

To construct the datasets that we use for training, validation and testing, which consist of pairs of vertices in the graph, we do the following. (1) Existence prediction. If $(u,v) \in E$, we give $(u,v)$ the label 0, otherwise its label is 1. The proportion of the two classes of edges is 25\% and 75\%, respectively, when undirected edges and multi-edges are included, and 50\% and 50\%, respectively, when only directed edges are included. (2) Direction prediction. Given an ordered node pair $(u,v)$, we give the label 0 if $(u,v) \in E$ and the label 1 if $(v, u) \in E$, conditioning on $(u,v) \in E$ or $(v,u) \in E$. The proportion of the two types of edges is 50\% and 50\%. (3) Three-class link prediction. For a pair of ordered nodes $(u,v)$, if $(u,v) \in E$, we give the label 0, if $(v,u) \in E$, we give the label 1, if $(u,v) \notin E$ and $(v,u)\notin E$, we give the label 2. The proportion of the three types of edges is 25\%, 25\%, and 50\%. (4) Direction prediction by three classes training. This task is based on the training of task (3). We only evaluate the performance with ordered node pairs $(u,v)$ when $(u,v) \in E$ or $(v,u) \in E$. We randomly generated ten folds for all datasets. We used 15\% and 5\% of edges for testing and validation for all datasets. The classification results are in Appendix \ref{sec:tables}.

\begin{table*}[ht]
\caption{Datasets for link prediction.}
\setlength\tabcolsep{2.5pt}
\centerline{
\begin{tabular}{l c c c c c c H c c } 
\toprule
              & Cornell & Texas & Wisconsin & Cora-ML    & CiteSeer & WikiCS  & Email  & Chameleon & Squirrel \\
\midrule
$\#$ Nodes  & 183& 183& 251 & 2,995    & 3,312   & 11,701  & 1,005  & 2,277 & 5,201\\
$\#$ Edges  &295&309&499  & 8,416 & 4,715  & 216,123 & 25,571 & 36,101 & 217,073\\
\% Directed edges &86.9 &76.6 &77.9 & 93.9 & 95.0 & 45.9 & &73.9&82.8 \\
$\#$ Features &2&2&2 & 2 & 2 & 2 & 2 & 2  & 2 \\
\bottomrule
\end{tabular}
}
\label{tab:linkdata}
\end{table*}

\section{Eigenvalues of the magnetic Laplacian}\label{sec: eigs}
In this section we state and prove two theorems.
Theorem \ref{thm: posdef}, which shows that both the normalized and unnormalized magnetic Laplacian a postive semidefinite, is well known (see e.g. \cite{fanuel:magneticEigenmaps2018}). Theorem \ref{thm: normal02}, which shows that the eigenvalues of the normalized magnetic Laplacian lie in the interval $[0,2]$, is a straightforward adaption of the corresponding result for the traditional normalized graph Laplacian. We give full proofs of both results for completeness.
\begin{theorem}\label{thm: posdef}
Let $G = (V, E)$ be a directed graph where $V$ is a set of $N$ vertices and $E\subseteq V\times  V$ is a set of directed edges. Then,
for all $q\geq 0,$ both the unnormalized magnetic Laplacian $\bL_{U}^{(q)}$ and its normalized counterpart 
$\bL_{N}^{(q)}$ are positive semidefinite.
\end{theorem}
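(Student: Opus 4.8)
The plan is to show that for any vector $\mathbf{x} \in \mathbb{C}^N$, the Hermitian quadratic form $\mathbf{x}^\dagger \bL^{(q)} \mathbf{x}$ is real and nonnegative, which is equivalent to positive semidefiniteness. The cleanest route is to exhibit $\bL^{(q)}$ as $\bL^{(q)} = \mathbf{B}^\dagger \mathbf{B}$ for a suitable matrix $\mathbf{B}$, or equivalently to write the quadratic form directly as a sum of squared magnitudes. I will begin with the unnormalized case $\bL_U^{(q)} = \bD_s - \bA_s \odot \exp(i\bTheta^{(q)})$.

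First I would write out the quadratic form explicitly. Using $\bD_s(u,u) = \sum_v \bA_s(u,v)$ and the fact that $\bH^{(q)}(u,v) = \bA_s(u,v)\exp(i\bTheta^{(q)}(u,v))$ is Hermitian, I expand
\begin{equation*}
    \mathbf{x}^\dagger \bL_U^{(q)} \mathbf{x} = \sum_{u} \bD_s(u,u)\,|\mathbf{x}(u)|^2 - \sum_{u,v} \overline{\mathbf{x}(u)}\, \bA_s(u,v) e^{i\bTheta^{(q)}(u,v)} \mathbf{x}(v).
\end{equation*}
The key algebraic step is the standard ``completing the square'' identity: because $\bA_s$ is symmetric and $\bTheta^{(q)}$ is skew-symmetric, one can symmetrize the double sum over the pair $(u,v)$ and recognize it as
\begin{equation*}
    \mathbf{x}^\dagger \bL_U^{(q)} \mathbf{x} = \frac{1}{2}\sum_{u,v} \bA_s(u,v)\left| \mathbf{x}(u) - e^{i\bTheta^{(q)}(u,v)} \mathbf{x}(v) \right|^2 .
\end{equation*}
Verifying this identity is the heart of the proof: expanding the squared magnitude produces the degree terms $\bA_s(u,v)|\mathbf{x}(u)|^2$ and $\bA_s(u,v)|\mathbf{x}(v)|^2$ (which sum to the $\bD_s$ contribution after collecting over $u$ and $v$) together with the cross terms, where the phase factor and its conjugate combine correctly thanks to skew-symmetry of $\bTheta^{(q)}$. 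Since $\bA_s(u,v) \geq 0$ and each $|\cdot|^2$ term is nonnegative, the form is manifestly real and nonnegative, giving positive semidefiniteness of $\bL_U^{(q)}$.

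For the normalized Laplacian $\bL_N^{(q)} = \bI - (\bD_s^{-1/2}\bA_s\bD_s^{-1/2})\odot \exp(i\bTheta^{(q)})$, I would reduce to the unnormalized case by the substitution $\mathbf{x} = \bD_s^{-1/2}\mathbf{y}$, which yields $\mathbf{y}^\dagger \bL_N^{(q)} \mathbf{y} = \mathbf{x}^\dagger \bL_U^{(q)} \mathbf{x} \geq 0$, or equivalently derive the analogous sum-of-squares expression with an extra factor of $\bD_s(u,u)^{-1/2}\bD_s(v,v)^{-1/2}$ inside. The main obstacle, such as it is, lies in the bookkeeping of the double sum and confirming that the phase factors pair up correctly under conjugation; this is a routine but careful computation rather than a conceptual difficulty, and it also requires a brief remark handling vertices of zero symmetrized degree (so that $\bD_s^{-1/2}$ is interpreted on the support of $\bD_s$, or one restricts attention to graphs with no isolated vertices).
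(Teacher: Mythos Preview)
Your proposal is correct and follows the same overall strategy as the paper: compute the Hermitian quadratic form of $\bL_U^{(q)}$ directly and then reduce the normalized case via $\bL_N^{(q)}=\bD_s^{-1/2}\bL_U^{(q)}\bD_s^{-1/2}$ with the substitution $\mathbf{y}=\bD_s^{-1/2}\mathbf{x}$.

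The one noteworthy difference is in how the unnormalized quadratic form is handled. You obtain the exact sum-of-squares identity
\[
\mathbf{x}^\dagger \bL_U^{(q)} \mathbf{x}=\tfrac{1}{2}\sum_{u,v}\bA_s(u,v)\bigl|\mathbf{x}(u)-e^{i\bTheta^{(q)}(u,v)}\mathbf{x}(v)\bigr|^2,
\]
which immediately gives nonnegativity. The paper instead takes the real part of the form, symmetrizes to produce a cosine cross term, and then \emph{bounds} that term (using $|\cos|\le 1$ and $|\mathbf{x}(u)\overline{\mathbf{x}(v)}|\le |\mathbf{x}(u)||\mathbf{x}(v)|$) to arrive at the cruder inequality
\[
\mathbf{x}^\dagger \bL_U^{(q)} \mathbf{x}\ge \tfrac{1}{2}\sum_{u,v}\bA_s(u,v)\bigl(|\mathbf{x}(u)|-|\mathbf{x}(v)|\bigr)^2\ge 0.
\]
Your identity is the cleaner and more informative version (it is the standard connection-Laplacian formula and also characterizes the null space), while the paper's argument trades exactness for a slightly more elementary final expression. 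Either route suffices for positive semidefiniteness, and your treatment of the normalized case, including the remark about isolated vertices, matches the paper's.
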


\begin{proof}
Let $\mathbf{x}\in\mathbb{C}^N.$ We first note that since $\mathbf{L}^{(q)}_U$ is Hermitian we have $\text{Imag}(\mathbf{x}^\dagger\mathbf{L}^{(q)}_U\mathbf{x})=0$. Next, we use the definition of $\mathbf{D}_s$ and the fact that $\mathbf{A}_s$ is symmetric to observe that 
\begin{align}
    &2\text{Real}\left(\mathbf{x}^\dagger\mathbf{L}^{(q)}_U\mathbf{x}\right)\nonumber\\=&2\sum_{u,v=1}^N \mathbf{D}_s(u,v)\mathbf{x}(u)\overline{\mathbf{x}(v)}-2\sum_{u,v=1}^N \mathbf{A}_s(u,v)\mathbf{x}(u)\overline{\mathbf{x}(v)}\cos(i \bTheta^{(q)} (u,v))\nonumber\\
    =&2\sum_{u=1}^N \mathbf{D}_s(u,u)\mathbf{x}(u)\overline{\mathbf{x}(u)}-2\sum_{u,v=1}^N \mathbf{A}_s(u,v)\mathbf{x}(u)\overline{\mathbf{x}(v)}\cos(i \bTheta^{(q)} (u,v))\nonumber\\
    =&2\sum_{u,v=1}^N \mathbf{A}_s(u,v)|\mathbf{x}(u)|^2-2\sum_{u,v=1}^N \mathbf{A}_s(u,v)\mathbf{x}(u)\overline{\mathbf{x}(v)}\cos(i \bTheta^{(q)} (u,v))\nonumber\\
    =&\sum_{u,v=1}^N \mathbf{A}_s(u,v)|\mathbf{x}(u)|^2+
    \sum_{u,v=1}^N \mathbf{A}_s(v,u)|\mathbf{x}(v)|^2-2\sum_{u,v=1}^N \mathbf{A}_s(u,v)\mathbf{x}(u)\overline{\mathbf{x}(v)}\cos(i \bTheta^{(q)} (u,v))\nonumber\\
        =&\sum_{u,v=1}^N \mathbf{A}_s(u,v)|\mathbf{x}(u)|^2+
    \sum_{u,v=1}^N \mathbf{A}_s(u,v)|\mathbf{x}(v)|^2-2\sum_{u,v=1}^N \mathbf{A}_s(u,v)\mathbf{x}(u)\overline{\mathbf{x}(v)}\cos(i \bTheta^{(q)} (u,v))\nonumber\\
    =&\sum_{u,v=1}^N \mathbf{A}_s(u,v)\left(|\mathbf{x}(u)|^2+
    |\mathbf{x}(v)|^2-2\mathbf{x}(u)\overline{\mathbf{x}(v)}\cos(i \bTheta^{(q)} (u,v))\right)\label{eqn: quad form}\\
    \geq&\sum_{u,v=1}^N \mathbf{A}_s(u,v)\left(|\mathbf{x}(u)|^2+
    |\mathbf{x}(v)|^2-2|\mathbf{x}(u)||\mathbf{x}(v)|\right)\nonumber\\
    =&\sum_{u,v=1}^N \mathbf{A}_s(u,v)(|\mathbf{x}(u)|-|\mathbf{x}(v)|)^2\nonumber\\
    \geq&0.\nonumber
\end{align}
Thus, $\mathbf{L}_U^{(q)}$ is positive semidefinite. For the normalized magnetic Laplacian, we note that 
\begin{equation*}
 \left(\bD_s^{-1/2}\bA_s\bD_s^{-1/2}\right) \odot \exp (i \bTheta^{(q)})
 =\bD_s^{-1/2}\left(\bA_s\odot \exp (i \bTheta^{(q)})\right)\bD_s^{-1/2},
\end{equation*}
and therefore
\begin{equation}\label{eqn: divide by D}
    \mathbf{L}^{(q)}_N=\bD_s^{-1/2}\mathbf{L}^{(q)}_U\bD_s^{-1/2}.
\end{equation}
Thus, letting $\mathbf{y}=\mathbf{D}_s^{-1/2}\mathbf{x},$ the fact that $\mathbf{D}_s$ is diagonal implies
\begin{equation*}
\mathbf{x}^\dagger\mathbf{L}^{(q)}_N\mathbf{x}=\mathbf{x}^\dagger\bD_s^{-1/2}\mathbf{L}^{(q)}_U\bD_s^{-1/2}\mathbf{x}=\mathbf{y}^\dagger\mathbf{L}^{(q)}_U\mathbf{y}\geq 0.
\end{equation*}
\end{proof}

\begin{theorem}\label{thm: normal02}Let $G = (V, E)$ be a directed graph where $V$ is a set of $N$ vertices and $E\subseteq V\times  V$ is a set of directed edges. Then,    for all $q\geq 0$,
the eigenvalues of the normalized magnetic Laplacian $\mathbf{L}_N^{(q)}$ are contained in the interval $[0,2]$.
\end{theorem}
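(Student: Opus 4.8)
The plan is to mimic the classical proof that the normalized graph Laplacian has spectrum in $[0,2]$, adapting it to the Hermitian setting. We already know from Theorem \ref{thm: posdef} that $\bL_N^{(q)}$ is positive semidefinite, so every eigenvalue satisfies $\lambda \geq 0$; it remains only to establish the upper bound $\lambda \leq 2$. Since $\bL_N^{(q)}$ is Hermitian, its eigenvalues are real and can be characterized via the Rayleigh quotient, so it suffices to show that $\mathbf{x}^\dagger \bL_N^{(q)} \mathbf{x} \leq 2\, \mathbf{x}^\dagger \mathbf{x}$ for all $\mathbf{x} \in \mathbb{C}^N$.

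First I would reduce the normalized estimate to a statement about the unnormalized quadratic form using \eqref{eqn: divide by D}: setting $\mathbf{y} = \bD_s^{-1/2}\mathbf{x}$, we have $\mathbf{x}^\dagger \bL_N^{(q)} \mathbf{x} = \mathbf{y}^\dagger \bL_U^{(q)} \mathbf{y}$. Next I would invoke the quadratic-form identity already derived in the proof of Theorem \ref{thm: posdef}, namely the expression in \eqref{eqn: quad form}, which writes $2\,\mathrm{Real}(\mathbf{y}^\dagger \bL_U^{(q)}\mathbf{y})$ as $\sum_{u,v} \bA_s(u,v)\bigl(|\mathbf{y}(u)|^2 + |\mathbf{y}(v)|^2 - 2\mathbf{y}(u)\overline{\mathbf{y}(v)}\cos(\bTheta^{(q)}(u,v))\bigr)$. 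The key step is to bound this from above by replacing the cosine cross-term with its magnitude bound in the opposite direction: using $\cos(\bTheta^{(q)}(u,v)) \geq -1$ together with $2|\mathbf{y}(u)||\mathbf{y}(v)| \leq |\mathbf{y}(u)|^2 + |\mathbf{y}(v)|^2$, one obtains the pointwise upper bound $|\mathbf{y}(u)|^2 + |\mathbf{y}(v)|^2 - 2\mathbf{y}(u)\overline{\mathbf{y}(v)}\cos(\bTheta^{(q)}(u,v)) \leq 2\bigl(|\mathbf{y}(u)|^2 + |\mathbf{y}(v)|^2\bigr)$. Summing against $\bA_s(u,v)$ and using $\sum_v \bA_s(u,v) = \bD_s(u,u)$ then yields $\mathbf{y}^\dagger \bL_U^{(q)}\mathbf{y} \leq 2\sum_u \bD_s(u,u)|\mathbf{y}(u)|^2 = 2\,\mathbf{y}^\dagger \bD_s \mathbf{y}$.

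Finally, substituting back $\mathbf{y} = \bD_s^{-1/2}\mathbf{x}$ gives $\mathbf{y}^\dagger \bD_s \mathbf{y} = \mathbf{x}^\dagger \mathbf{x}$, so $\mathbf{x}^\dagger \bL_N^{(q)}\mathbf{x} \leq 2\,\mathbf{x}^\dagger\mathbf{x}$, which combined with positive semidefiniteness places every eigenvalue in $[0,2]$. The main subtlety, rather than any deep obstacle, is correctly handling the complex cross-term: one must take the real part (legitimate because the form is real, as $\bL_U^{(q)}$ is Hermitian) and then apply $|\mathbf{y}(u)\overline{\mathbf{y}(v)}\cos(\bTheta^{(q)}(u,v))| \leq |\mathbf{y}(u)||\mathbf{y}(v)|$ to control the phase factor. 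Since the paper has already done exactly this bookkeeping in the lower-bound direction for Theorem \ref{thm: posdef}, the cleanest writeup reuses identity \eqref{eqn: quad form} verbatim and simply flips the inequality, so the argument should be short.
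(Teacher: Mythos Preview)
Your proposal is correct and follows essentially the same route as the paper: invoke Theorem \ref{thm: posdef} for the lower bound, use the Rayleigh quotient (Courant--Fischer) for the upper bound, substitute $\mathbf{y}=\bD_s^{-1/2}\mathbf{x}$ via \eqref{eqn: divide by D}, reuse the quadratic-form identity \eqref{eqn: quad form}, bound the cross-term by $|\mathbf{y}(u)|^2+|\mathbf{y}(v)|^2$, and sum against $\bA_s$ to obtain $\mathbf{y}^\dagger\bL_U^{(q)}\mathbf{y}\leq 2\,\mathbf{y}^\dagger\bD_s\mathbf{y}$. The only cosmetic difference is that the paper passes through the intermediate bound $(|\mathbf{y}(u)|+|\mathbf{y}(v)|)^2$ before reaching $2(|\mathbf{y}(u)|^2+|\mathbf{y}(v)|^2)$, while you go there directly; the argument is otherwise identical.
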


\begin{proof}
By Theorem \ref{thm: posdef}, we know that $\mathbf{L}^{(q)}_N$ has real, nonnegative eigenvalues. Therefore, we need to show that the lead eigenvalue, $\lambda_{N},$ is less than or equal to 2.
The Courant-Fischer theorem shows that 
\begin{equation*}
    \lambda_N=\max_{\mathbf{x}\neq 0}\frac{\mathbf{x}^\dagger\mathbf{L}^{(q)}_N\mathbf{x}}{\mathbf{x}^\dagger\mathbf{x}}.
\end{equation*}
Therefore, using \eqref{eqn: divide by D} and setting $\mathbf{y}=\mathbf{D}_s^{-1/2}\mathbf{x}$, we have 
\begin{equation*}
    \lambda_N=\max_{\mathbf{x}\neq 0}\frac{\mathbf{x}^\dagger\mathbf{D}_s^{-1/2}\mathbf{L}^{(q)}_U\mathbf{D}_s^{-1/2}\mathbf{x}}{\mathbf{x}^\dagger\mathbf{x}} =\max_{\mathbf{y}\neq 0}\frac{\mathbf{y}^\dagger\mathbf{L}^{(q)}_U\mathbf{y}}{\mathbf{y}^\dagger\mathbf{D}_s\mathbf{y}}.
\end{equation*}
First, we observe that since $\mathbf{D}_s$ is diagonal, we have 
\begin{equation*}
    \mathbf{y}^\dagger\mathbf{D}_s\mathbf{y}=\sum_{u,v=1}^N \mathbf{D}_s(u,v)\mathbf{y}(u)\overline{\mathbf{y}(v)}=\sum_{u=1}^N \mathbf{D}_s(u,u)|\mathbf{y}(u)|^2
\end{equation*}
Next, we note that by \eqref{eqn: quad form}, we have \begin{align*}
    \mathbf{y}^\dagger\mathbf{L}^{(q)}_U\mathbf{y} &= \frac{1}{2}\sum_{u,v=1}^N \mathbf{A}_s(u,v)\left(|\mathbf{x}(u)|^2+
    |\mathbf{x}(v)|^2-2\mathbf{x}(u)\overline{\mathbf{x}(v)}\cos(i \bTheta^{(q)} (u,v))\right)\\
    &\leq \frac{1}{2}\sum_{u,v=1}^N \mathbf{A}_s(u,v)(|\mathbf{x}(u)|+|\mathbf{x}(v)|)^2\\
    &\leq  \sum_{u,v=1}^N \mathbf{A}_s(u,v)(|\mathbf{x}(u)|^2+|\mathbf{x}(v)|^2).
\end{align*}
Therefore, since $\mathbf{A_s}$ is symmetric, we have 
\begin{align*}
    \mathbf{y}^\dagger\mathbf{L}^{(q)}_U\mathbf{y} 
    &\leq 2\sum_{u,v=1}^N \mathbf{A}_s(u,v)|\mathbf{x}(u)|^2\\
    &=2\sum_{u=1}^N|\mathbf{x}(u)|^2\left(\sum_{v=1}^N \mathbf{A}_s(u,v)\right)\\
    &=2\sum_{u=1}^N\mathbf{D}_s(u,u)|\mathbf{x}(u)|^2\\
    &=2    \mathbf{y}^\dagger\mathbf{D}_s\mathbf{y}.
\end{align*}
\end{proof}

\section{The eigenvectors and eigenvalues of directed stars and cycles}\label{sec: star}

In this section, we examine the eigenvectors and eigenvalues of the unnormalized magnetic Laplacian on two example graphs. As alluded to in the main text, in the directed star graph directional information is contained in the eigenvectors only. For the directed cycle, on the other hand, the magnetic Laplacian encodes the directed nature of the graph only through the eigenvalues. Both examples can be verified via direct pen and paper calculation. 
\begin{figure}
    \centering
    \begin{subfigure}{0.495\textwidth}  
    	\centering
    	\includegraphics[scale=0.15]{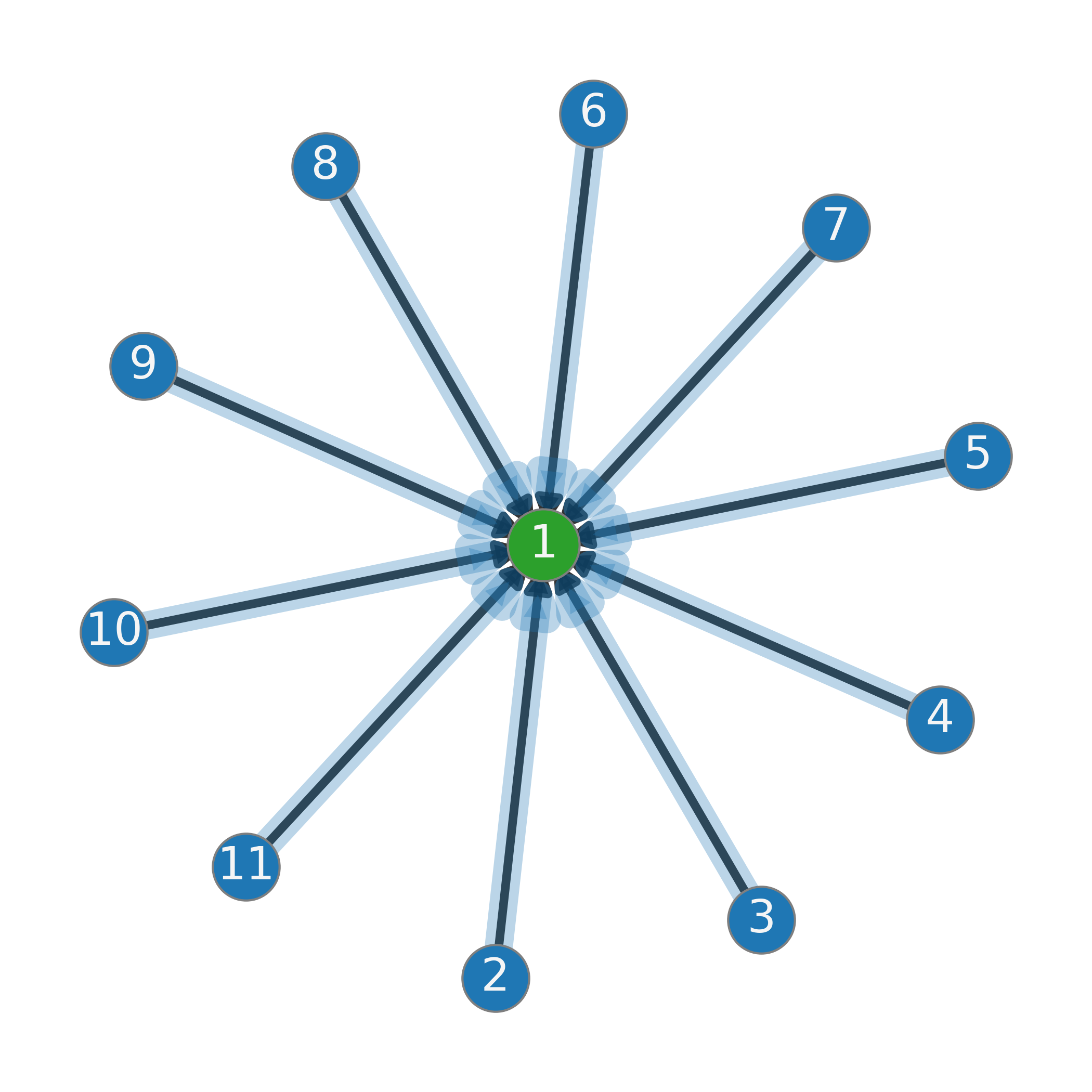}
    	\caption{}
    	\label{fig:in_star}
    \end{subfigure}
	\begin{subfigure}{0.495\textwidth}  
		\centering
		\includegraphics[scale=0.15]{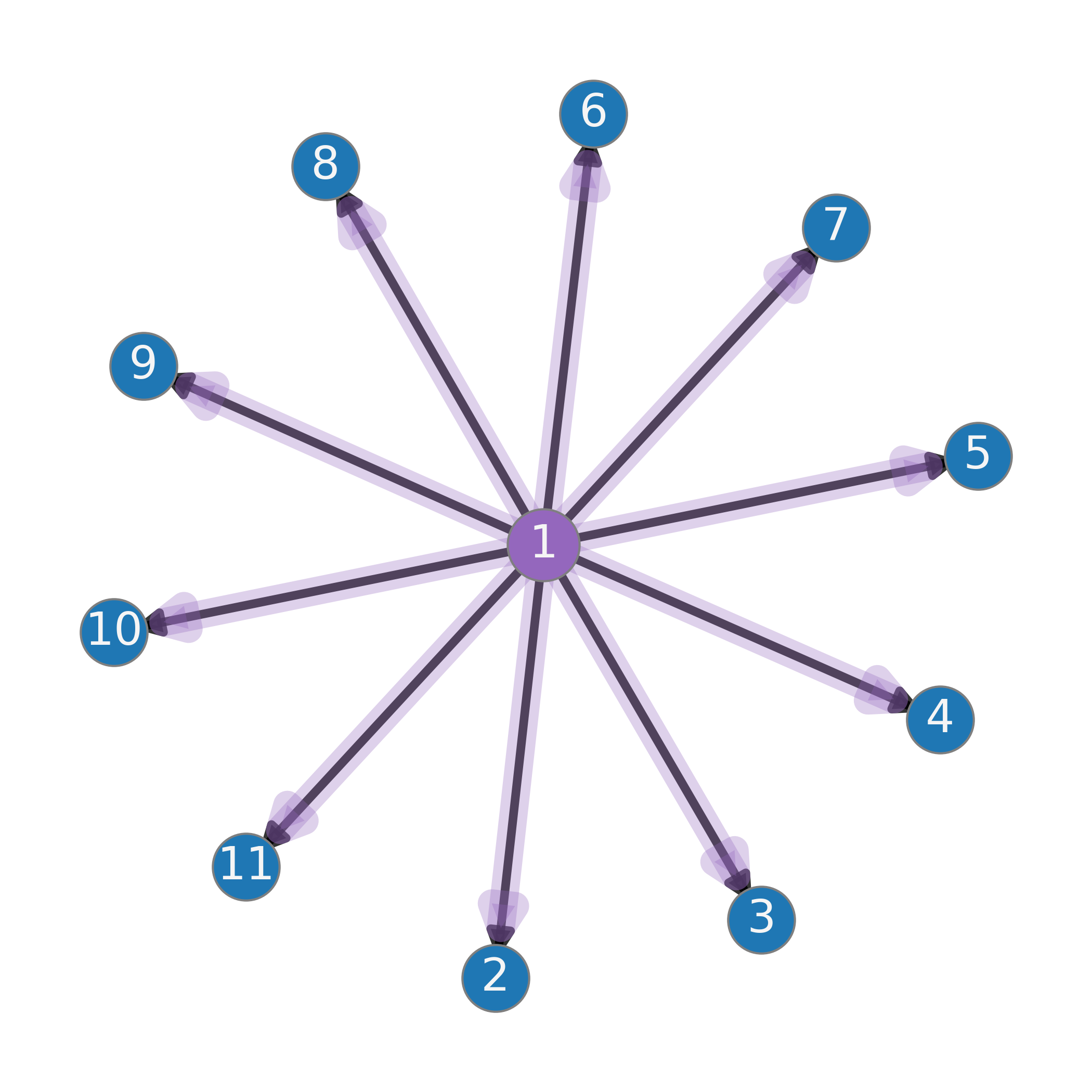}
		\caption{}
		\label{fig:out_star}
	\end{subfigure}
    \caption{Directed stars (a) $G^{(\text{in})}$, and (b) $G^{(\text{out})}$}
    \label{fig:star_graph}
\end{figure}

\begin{example}\label{ex:star}
    Let $G^{(\text{in})}$ and $G^{(\text{out})}$ be the directed star graphs with vertices $V = \{1, \ldots, N\}$ and edges pointing in/out to the central vertex as shown in Figure \ref{fig:star_graph}. Then the eigenvalues of $\bL_U^{(q,\text{in})}$, the unnormalized magnetic Laplacian on $G^{\text{in}}$, are given by 
    \begin{equation*}
    \lambda_1^\text{in}=0,\quad  \lambda_k^\text{in}=\frac{1}{2} \:\:\text{for }\: 2\leq k\leq N-1, \quad \text{ and }\quad\lambda_N^{\text{in}}=\frac{N}{2}.
    \end{equation*}
    If we let $v = 1$ be the central vertex, then the lead eigenvector is given by
    \begin{equation*}
        \mathbf{u}^\text{in}_1(1)=e^{2\pi i q},\quad \mathbf{u}^\text{in}_1(n)=1,\: 2\leq n\leq N.
    \end{equation*}
    For $2\leq k\leq N-1$, the eigenvectors are
    \begin{equation*}
        \mathbf{u}^\text{in}_k=\bm{\delta}_k-\bm{\delta}_{k+1},
    \end{equation*}
and the final eigenvector is given by 
\begin{equation*}\mathbf{u}^\text{in}_N(1)=-e^{2\pi i q},\quad
\mathbf{u}^\text{in}_N(n)=\frac{1}{N-1}, \:2\leq n \leq N.
\end{equation*}    The phase matrices satisfies $\bTheta^{(q,\text{in})}=-\bTheta^{(q,\text{out})}$. Therefore, the associated magnetic Laplacians satisfy $\bL_U^{(q,\text{in})}(u,v)=\overline{\bL_U^{(q,\text{out})}(u,v)}.$ Since these matrices are Hermitian, this implies that the corresponding eigenvalue-eigenvector pairs 
    satisfy $\lambda_k^{\text{in}}=\lambda_k^{\text{out}}$, and $\mathbf{u}_k^{\text{in}}=\overline{\mathbf{u}_k^{\text{out}}}$. Hence, $\mathbf{u}^\text{in}_1$ and $\mathbf{u}_1^{\text{out}}$ identify the central vertex, and  the sign of their imaginary parts at this vertex  identifies whether it is a source or a sink. On the other hand, the eigenvalues give no directional information.
    \end{example}

\begin{example}\label{ex:cycle}
     Let $G$ be the directed cycle. 
     Then, then the eigenvalues of  $\bL^{(q)}_U$ is are the classical Fourier modes $\mathbf{u}_k(n)= e^{(2\pi i k n/N)}$, independent of $q$. The eigenvalues, however, do depend on $q$ and are given by 
    \begin{equation*}
         \lambda_k=1-\cos\left(2\pi\left(\frac{k}{N}+q\right)\right),\quad 1\leq k \leq N.
    \end{equation*}
 \end{example}

\section{Expanded details of numerical results}\label{sec:tables}

Here we present more details on our node classification results in Tables \ref{table:class_alpha}, \ref{table:class_meta}, \ref{table:classcyc}, \ref{table:classcycfill}, and \ref{table:sup_class}; and more details of our link prediction results in Tables \ref{table:link_pred_sup}, \ref{table:link_sup_three2}, \ref{table:existence_sup}, \ref{table:direction_sup}, \ref{table:three_sup}, and \ref{table:threedirect_sup}. We present our results in the form mean $\pm$ standard deviation. 

The networks GCN, APPNP, GAT, SAGE, and GIN were not designed with directed graphs as the primary motivation. Therefore, we implemented these methods in two ways: (i) with the original asymmetric adjacency matrix; and (ii) with a symmetrized adjacency matrix. For node 
classification, symmetrizing the adjacency matrix improved performance for most of these networks on most of the real datasets. We did not test the symmetric implementations on our synthetic DSBM datasets because these datasets, by design, place a heavy importance on directional information.  For link prediction, on the other hand, we only use asymmetric adjacency matrices. 
In our tables below, GCN, APPNP, GAT, SAGE, and GIN refer to the implementations with the symmetrized adjacency matrix and GCN-D, APPNP-D, GAT-D, SAGE-D, and GIN-D refer to our implementation with the asymmetric matrix.

Tables \ref{table:class_alpha}, \ref{table:class_meta}, \ref{table:classcyc}, and \ref{table:classcycfill} provide the precise node classification results for the four types of DSBM graphs introduced in Section \ref{sec: dsbm} of the main text; they correspond to the plots in Figure \ref{fig:synthetic_plots} of the main text. Table \ref{table:sup_class} contains all of the information contained in Table \ref{table:class} from the main text, but reports separately the results of  GCN, APPNP, GAT, SAGE, and GIN (which use the symmetrized adjacency matrix) and the results of GCN-D, APPNP-D, GAT-D, SAGE-D, and GIN-D (which use the asymmetric adjacency matrix), whereas Table \ref{table:class} in the main text reported only the best-performer between the two variants.

With respect to link prediction, there are many results here in the appendix in addition to what is reported in the main text. Table \ref{table:link_pred_sup} is the same as Table \ref{table:link_pred} from the main text, except here in the  appendix we also include the \textit{Texas} data set. Table \ref{table:link_sup_three2} expands upon Table \ref{table:link_pred_sup} by considering the more difficult three-class classification problems described in Appendix \ref{sec: link prediction}. All of the results in these tables include undirected edges and, if present, multi-edges, which have essentially random labels with respect to their directionality (see also Section \ref{sec: link prediction}), and hence these results indicate the model's ability to ignore these noisy labels. MagNet performs quite well across this slate of link prediction experiments (top performer in 15/20 experiments). 

Tables \ref{table:existence_sup}, \ref{table:direction_sup}, \ref{table:three_sup}, and \ref{table:threedirect_sup} evaluate the same four link prediction tasks as Tables  \ref{table:link_pred_sup} and \ref{table:link_sup_three2}, except that undirected edges and multi-edges are not included in the training/validation/testing sets. Thus all labels are well-defined and noiseless. In this setting MagNet also performs very well, obtaining the top performance in 22/32 experiments across all four tables.

Aside from Table \ref{table:three_sup}, MagNet achieves the highest testing accuracy in 20/24 experiments. Digraph achieves the highest testing accuracy in 3/8 experiments, and MagNet is best in 2/8 experiments as shown in Table \ref{table:three_sup}. Having said that, there is not a statistically significant difference between MagNet and the top performing method in two other datasets (\textit{Wisconsin} and \textit{Cora-ML}), and MagNet is also a very close second on \textit{WikiCS}. Thus, MagNet is either the top performer or on par with the top performing method in 5/8 datasets in Table \ref{table:three_sup}. Nevertheless, the task is more difficult for MagNet than other tasks. We hypothesize that this is because half of the task is identifying whether there is an edge between $u,v$, or not; the other half, if there is an edge, is determining its direction. The first half of the task is an undirected task, and thus $q>0$ could provide noisy features for those pairs of vertices for which there is no edge. The Digraph method utilizes the symmetric Laplacian, which is unsuitable for direction prediction but works well for predicting the presence of an edge in either direction or the absence of an edge. The direction of the edge is more important in Tables \ref{table:existence_sup}, \ref{table:direction_sup}, and \ref{table:threedirect_sup}, and MagNet captures the direction information very well. The results indicate there is a trade-off between capturing undirected and directed features. This observation also leads to a potential future research direction that utilizes magnetic Laplacian matrices based on multiple values of $q$, making MagNet capture both undirected and directed information precisely.

\begin{table*}[h!]
    \caption{Node classification accuracy of ordered DSBM graphs with varying edge density. }
    \centerline{
    \begin{tabular}{c c c c} 
        \toprule
Method / $\alpha^{\ast} $ &0.1&0.08&0.05\\ \midrule
ChebNet&19.9$\pm$0.6&20.0$\pm$0.7&20.0$\pm$0.7\\
GCN-D&68.9$\pm$2.1&67.6$\pm$2.7&58.5$\pm$2.0\\\midrule
APPNP-D&97.7$\pm$1.7&95.9$\pm$2.2&\uline{90.3$\pm$2.4}\\
SAGE-D&20.1$\pm$1.1&19.9$\pm$0.8&19.9$\pm$1.0\\
GIN-D&57.3$\pm$5.8&55.4$\pm$5.5&50.9$\pm$7.7\\
GAT-D&42.1$\pm$5.3&39.0$\pm$7.0&37.2$\pm$5.5\\\midrule
DGCN&84.9$\pm$7.2&81.2$\pm$8.2&64.4$\pm$12.4\\
DiGraph&82.1$\pm$1.7&77.7$\pm$1.6&66.1$\pm$2.4\\
DiGraphIB&\uline{99.2$\pm$0.5}&\uline{97.7$\pm$0.7}&89.3$\pm$1.7\\\midrule
MagNet&\textbf{99.6$\pm$0.2}&\textbf{98.3$\pm$0.8}&\textbf{94.1$\pm$1.2}\\
\cmidrule{2-4}
Best $q$ & 0.25 & 0.10 & 0.25\\
\bottomrule
    \end{tabular}
    }
    \label{table:class_alpha}
\end{table*}

\begin{table*}[h!]
    \caption{Node classification accuracy of ordered DSBM graphs with varying net flow. }
    \setlength\tabcolsep{2.5 pt}
    \centerline{
    \begin{tabular}{c c c c c c c c c} 
        \toprule
        Method / $\beta^{\ast} $ &.05&.10&.15&.20&.25&.30&.35&.40\\
        \midrule
ChebNet&19.9$\pm$0.7&20.1$\pm$0.6&20.0$\pm$0.6&20.1$\pm$0.8&19.9$\pm$0.9&20.0$\pm$0.5&19.7$\pm$0.9&20.0$\pm$0.5\\
GCN-D&68.6$\pm$2.2&74.1$\pm$1.8&75.5$\pm$1.3&74.9$\pm$1.3&72.0$\pm$1.4&65.4$\pm$1.6&58.1$\pm$2.4&45.6$\pm$4.7\\\midrule
APPNP-D&97.4$\pm$1.8&94.3$\pm$2.4&89.4$\pm$3.6&79.8$\pm$9.0&69.4$\pm$3.9&59.6$\pm$4.9&51.8$\pm$4.5&39.4$\pm$5.3\\
SAGE-D&20.2$\pm$1.2&20.0$\pm$1.0&20.0$\pm$0.8&20.0$\pm$0.7&19.6$\pm$0.9&19.8$\pm$0.7&19.9$\pm$0.9&19.9$\pm$0.8\\
GIN-D&57.9$\pm$6.3&48.0$\pm$11.4&32.7$\pm$12.9&26.5$\pm$10.0&23.8$\pm$6.0&20.6$\pm$3.0&20.5$\pm$2.8&19.8$\pm$0.5\\
GAT-D&42.0$\pm$4.8&32.7$\pm$5.1&25.6$\pm$3.8&19.9$\pm$1.4&20.0$\pm$1.0&19.8$\pm$0.8&19.6$\pm$0.2&19.5$\pm$0.2\\\midrule
DGCN&81.4$\pm$1.1&84.7$\pm$0.7&85.5$\pm$1.0&86.2$\pm$0.8&84.2$\pm$1.1&\uline{78.4$\pm$1.3}&\textbf{69.6$\pm$1.5}&\textbf{54.3$\pm$1.5}\\
DiGraph&82.5$\pm$1.4&82.9$\pm$1.9&81.9$\pm$1.1&79.7$\pm$1.3&73.5$\pm$1.9&67.4$\pm$2.8&57.8$\pm$1.6&43.0$\pm$7.1\\
DiGraphIB&\uline{99.2$\pm$0.4}&\uline{97.9$\pm$0.6}&\uline{94.1$\pm$1.7}&\uline{88.7$\pm$2.0}&\uline{82.3$\pm$2.7}&70.0$\pm$2.2&57.8$\pm$6.4&41.0$\pm$9.0\\\midrule
MagNet&\textbf{99.6$\pm$0.2}&\textbf{99.0$\pm$1.0}&\textbf{97.5$\pm$0.8}&\textbf{94.2$\pm$1.6}&\textbf{88.7$\pm$1.9}&\textbf{79.4$\pm$2.9}&\uline{68.8$\pm$2.4}&\uline{51.8$\pm$3.1}\\
\cmidrule{2-9}
Best $q$ & 0.25 & 0.20 & 0.20 &0.25 & 0.20 & 0.20 &0.20 &0.25\\
\bottomrule
    \end{tabular}
    }
    \label{table:class_meta}
\end{table*}

\begin{table*}[h!]
    \caption{Node classification accuracy of cyclic DSBM graphs with varying net flow.}
    \centerline{
    \begin{tabular}{c c c c c c c } 
        \toprule
        Method / $\beta^{\ast} $ &.05&.10&.15&.20&.25&.30\\ \midrule
ChebNet&74.7$\pm$16.5&65.4$\pm$22.2&70.5$\pm$22.7&64.6$\pm$31.6&85.0$\pm$8.7&60.1$\pm$19.8\\
GCN-D&78.8$\pm$30.0&81.2$\pm$14.7&69.5$\pm$4.2&58.6$\pm$37.2&75.4$\pm$7.4&43.6$\pm$32.4\\\midrule
APPNP-D&19.6$\pm$0.5&19.5$\pm$0.4&19.6$\pm$0.3&19.6$\pm$0.3&19.6$\pm$0.4&19.6$\pm$0.3\\
SAGE-D&88.6$\pm$8.3&81.9$\pm$17.2&81.4$\pm$8.4&73.5$\pm$20.6&79.2$\pm$10.5&59.7$\pm$25.3\\
GIN-D&75.3$\pm$21.5&66.9$\pm$24.7&53.9$\pm$15.4&68.7$\pm$19.8&62.3$\pm$20.3&41.6$\pm$18.2\\
GAT-D&98.3$\pm$2.2&80.6$\pm$30.7&95.5$\pm$12.4&59.7$\pm$39.2&93.1$\pm$4.4&68.7$\pm$35.9\\\midrule
DGCN&83.7$\pm$23.1&\uline{99.8$\pm$0.2}&\textbf{99.4$\pm$0.8}&87.4$\pm$25.1&96.5$\pm$5.1&79.9$\pm$25.8\\
DiGraph&39.1$\pm$33.6&36.4$\pm$6.6&37.3$\pm$27.1&50.3$\pm$36.6&42.3$\pm$2.2&34.4$\pm$23.5\\
DiGraphIB&\uline{84.8$\pm$17.0}&94.2$\pm$6.6&\uline{99.2$\pm$0.6}&\textbf{98.1$\pm$1.1}&\uline{96.7$\pm$3.3}&\uline{84.7$\pm$7.4}\\\midrule
MagNet&\textbf{100.0$\pm$0.0}&\textbf{99.9$\pm$0.2}&87.4$\pm$28.4&\uline{96.8$\pm$12.5}&\textbf{100.0$\pm$0.0}&\textbf{99.4$\pm$0.6}\\
\cmidrule{2-7}
Best $q$ & 0.05 & 0.05 & 0.05 &0.05 & 0.1 & 0.1\\
\bottomrule
    \end{tabular}
    }
    \label{table:classcyc}
\end{table*}

\begin{table*}[h!]
    \caption{Node classification accuracy of noisy cyclic DSBM graphs with varying net flow. }
    \centerline{
    \begin{tabular}{c c c c c } 
        \toprule
Method / $\beta^{\ast} $ &.05&.10&.15&.20\\ \midrule
ChebNet&18.3$\pm$3.1&18.8$\pm$3.8&18.9$\pm$3.3&19.3$\pm$3.4\\
GCN-D&24.2$\pm$6.8&22.8$\pm$4.1&21.3$\pm$3.5&20.3$\pm$4.4\\\midrule
APPNP-D&17.4$\pm$1.8&17.9$\pm$2.0&17.8$\pm$1.8&17.6$\pm$2.6\\
SAGE-D&26.4$\pm$7.7&21.7$\pm$5.5&20.1$\pm$4.5&20.0$\pm$3.8\\
GIN-D&24.7$\pm$6.4&20.2$\pm$3.8&22.2$\pm$4.6&20.0$\pm$3.8\\
GAT-D&27.4$\pm$6.9&24.6$\pm$5.1&21.9$\pm$4.6&\uline{21.6$\pm$4.2}\\\midrule
DGCN&37.3$\pm$6.1&28.9$\pm$6.9&25.4$\pm$6.7&20.4$\pm$4.1\\
DiGraph&18.0$\pm$1.8&18.1$\pm$1.8&18.2$\pm$1.6&17.9$\pm$2.4\\
DiGraphIB&\uline{43.4$\pm$10.1}&\uline{32.3$\pm$10.1}&\uline{26.8$\pm$11.6}&19.1$\pm$2.9\\\midrule
MagNet&\textbf{80.5$\pm$7.0}&\textbf{63.7$\pm$8.2}&\textbf{56.9$\pm$6.7}&\textbf{70.2$\pm$5.1}\\
\cmidrule{2-5}
Best $q$ & 0.25 & 0.25 & 0.25 &0.20\\
\bottomrule
    \end{tabular}
    }
    \label{table:classcycfill}
\end{table*}

\begin{table*}
\centering
\caption{Testing accuracy of node classification. The best results are in \textbf{bold} and the second best are \uline{underlined}.}
\begin{tabular}{ c  H H H c c c  c c c} 
 \toprule
& Syn1 & Syn2 & Syn3 & Cornell & Texas & Wisconsin & Cora-ML & Citeseer &Telegram  \\ [0.5ex]
 \midrule
ChebNet&19.8$\pm$0.3&20.1$\pm$0.6&20.2$\pm$0.8&79.8$\pm$5.0&79.2$\pm$7.5&81.6$\pm$6.3&80.0$\pm$1.8&66.7$\pm$1.6&73.4 $\pm$5.8\\
GCN  &-&-&-&59.0$\pm$6.4&57.9$\pm$5.4&55.9$\pm$5.4&82.0$\pm$1.1&66.0$\pm$1.5&73.4$\pm$5.9\\
GCN-D&70.2$\pm$1.3&67.3$\pm$3.5&57.4$\pm$1.3&57.3$\pm$4.8&58.7$\pm$3.8&52.7$\pm$5.4&72.6$\pm$1.6&60.5$\pm$1.6&63.6$\pm$4.7\\ \midrule
APPNP&-&-&-&58.7$\pm$4.0&57.0$\pm$4.8&49.6$\pm$6.5&\textbf{82.6$\pm$1.4}&66.9$\pm$1.8& 69.4$\pm$3.5\\
APPNP-D&98.7$\pm$0.3&95.9$\pm$2.1&\uline{90.3$\pm$1.9}&58.4$\pm$3.0&56.8$\pm$2.7&51.8$\pm$7.4&68.6$\pm$2.5&58.6$\pm$1.8&66.4$\pm$5.0\\
GAT&-&-&-&57.6$\pm$4.9&61.1$\pm$5.0&54.1$\pm$4.2&81.9$\pm$1.0&\uline{67.3$\pm$1.3}&72.6$\pm$7.5\\
GAT-D&42.6$\pm$6.9&42.5$\pm$5.4&41.4$\pm$5.7&57.3$\pm$7.7&59.2$\pm$4.1&52.0$\pm$4.6&73.1$\pm$1.6&62.7$\pm$1.6&67.4$\pm$4.4\\
SAGE&-&-&-&77.6$\pm$6.3&\textbf{84.3$\pm$5.5}&79.2$\pm$5.3&\uline{82.3$\pm$1.2}&66.0$\pm$1.5&56.6$\pm$6.0\\
SAGE-D&19.5$\pm$0.9&20.1$\pm$0.8&19.7$\pm$1.1&\uline{80.0$\pm$6.1}&76.2$\pm$3.8&\uline{83.1$\pm$4.8}&72.0$\pm$2.1&61.8$\pm$2.0&55.0$\pm$7.4\\
GIN&-&-&-&57.9$\pm$5.7&65.2$\pm$6.5&58.2$\pm$5.1&78.1$\pm$2.0&63.3$\pm$2.5&74.4$\pm$8.1\\
GIN-D&54.9$\pm$2.6&54.5$\pm$4.3&55.2$\pm$3.5&55.4$\pm$5.2&58.1$\pm$5.3&50.2$\pm$7.6&67.0$\pm$3.2&60.4$\pm$2.3&68.8$\pm$4.0\\\midrule
DGCN&\uline{99.1$\pm$0.2}&\uline{97.3$\pm$0.3}&88.6$\pm$1.1&67.3$\pm$4.3&71.7$\pm$7.4&65.5$\pm$4.7&81.3$\pm$1.4&66.3$\pm$2.0&\textbf{90.4$\pm$5.6}\\
Digraph&80.7$\pm$2.2&78.1$\pm$0.8&66.5$\pm$1.1&66.8$\pm$6.2&64.9$\pm$8.1&59.6$\pm$3.8&79.4$\pm$1.8&62.6$\pm$2.2&82.0$\pm$3.1\\
DiGraphIB & & & & 64.4$\pm$9.0& 64.9$\pm$13.7 & 64.1$\pm$7.0 & 79.3$\pm$ 1.2 & 61.1$\pm$1.7 & 64.1$\pm$7.0\\\midrule
MagNet&\textbf{99.8$\pm$0.1}&\textbf{98.4$\pm$0.5}&\textbf{93.8$\pm$1.5}&\textbf{84.3$\pm$7.0}&\uline{83.3$\pm$6.1}&\textbf{85.7$\pm$3.2}&79.8$\pm$2.5&\textbf{67.5$\pm$1.8}&\uline{87.6 $\pm$2.9}\\ 
\cmidrule{2-10}
Best $q$ & x & x & x & 0.25 & 0.15 & 0.05 & 0.0 & 0.0 & 0.15\\
[0.5ex] \hline
\end{tabular}
\label{table:sup_class}
\end{table*}

\begin{landscape}
\begin{table}[t]
    \caption{Link prediction accuracy (\%) with noisy labels. The best results are in \textbf{bold} and the second best are \uline{underlined}.}
    \setlength\tabcolsep{2.8pt}
    \centerline{
    \begin{tabular}{c c c c c c c c c c c} 
        \toprule
&\multicolumn{5}{c}{Direction prediction} & \multicolumn{5}{c}{Existence prediction}\\
\cmidrule(lr){2-6} 
\cmidrule(lr){7-11}
&Cornell &Texas &Wisconsin &Cora-ML &CiteSeer&Cornell &Texas &Wisconsin &Cora-ML &CiteSeer \\\midrule
ChebNet&71.0$\pm$5.5&66.8$\pm$6.9&67.5$\pm$4.5&72.7$\pm$1.5&68.0$\pm$1.6&80.1$\pm$2.3&81.7$\pm$2.7&82.5$\pm$1.9&80.0$\pm$0.6&77.4$\pm$0.4\\
GCN&56.2$\pm$8.7&69.8$\pm$4.9&71.0$\pm$4.0&79.8$\pm$1.1&68.9$\pm$2.8&75.1$\pm$1.4&76.1$\pm$3.0&75.1$\pm$1.9&81.6$\pm$0.5&76.9$\pm$0.5\\\midrule
APPNP&69.5$\pm$9.0&76.8$\pm$5.1&75.1$\pm$3.5&\uline{83.7$\pm$0.7}&77.9$\pm$1.6&74.9$\pm$1.5&76.4$\pm$2.5&75.7$\pm$2.2&\uline{82.5$\pm$0.6}&78.6$\pm$0.7\\
SAGE&75.2$\pm$11.0&69.8$\pm$5.9&72.0$\pm$3.5&68.2$\pm$0.8&68.7$\pm$1.5&79.8$\pm$2.4&75.2$\pm$3.1&77.3$\pm$2.9&75.0$\pm$0.0&74.1$\pm$1.0\\
GIN&69.3$\pm$6.0&76.1$\pm$4.5&74.8$\pm$3.7&83.2$\pm$0.9&76.3$\pm$1.4&74.5$\pm$2.1&77.5$\pm$3.8&76.2$\pm$1.9&\uline{82.5$\pm$0.7}&77.9$\pm$0.7\\
GAT&67.9$\pm$11.1&50.0$\pm$2.0&53.2$\pm$2.6&50.0$\pm$0.1&50.6$\pm$0.5&77.9$\pm$3.2&74.9$\pm$0.3&74.6$\pm$0.0&75.0$\pm$0.0&75.0$\pm$0.0\\\midrule
DGCN&\textbf{80.7$\pm$6.3}&72.5$\pm$8.0&74.5$\pm$7.2&79.6$\pm$1.5&78.5$\pm$2.3&80.0$\pm$3.9&82.3$\pm$3.1&\uline{82.8$\pm$2.0}&82.1$\pm$0.5&\uline{81.2$\pm$0.4}\\
DiGraph&79.3$\pm$1.9&\uline{79.8$\pm$3.0}&\uline{82.3$\pm$4.9}&80.8$\pm$1.1&81.0$\pm$1.1&\textbf{80.6$\pm$2.5}&82.8$\pm$2.5&\uline{82.8$\pm$2.6}&81.8$\pm$0.5&\textbf{82.2$\pm$0.6}\\
DiGraphIB&79.8$\pm$4.8&\textbf{81.1$\pm$2.5}&82.0$\pm$4.9&83.4$\pm$1.1&\uline{82.5$\pm$1.3}&\uline{80.5$\pm$3.6}&\uline{83.6$\pm$2.6}&82.4$\pm$2.2&82.2$\pm$0.5&81.0$\pm$0.5\\\midrule
MagNet&\textbf{80.7$\pm$2.7}&79.5$\pm$3.7&\textbf{83.6$\pm$2.8}&\textbf{86.1$\pm$0.9}&\textbf{85.1$\pm$0.8}&\textbf{80.6$\pm$3.8}&\textbf{83.8$\pm$3.3}&\textbf{82.9$\pm$2.6}&\textbf{82.8$\pm$0.7}&79.9$\pm$0.5\\
\cmidrule{2-11}
Best $q$ & 0.10 & 0.10 & 0.05 & 0.05 & 0.15 & 0.25 & 0.10 & 0.25 & 0.05 & 0.05\\
\bottomrule
\end{tabular}
}
\label{table:link_pred_sup}
\end{table}

\begin{table}[t]
    \caption{Link prediction accuracy based on three classes labels(\%) with noisy labels. The best results are in \textbf{bold} and the second best are \uline{underlined}.}
    \setlength\tabcolsep{2.8pt}
    \centerline{
    \begin{tabular}{c c c c c c c c c c c} 
        \toprule
&\multicolumn{5}{c}{Three classes link prediction} & \multicolumn{5}{c}{Direction prediction by three classes training}\\
\cmidrule(lr){2-6} 
\cmidrule(lr){7-11}
&Cornell &Texas &Wisconsin &Cora-ML &CiteSeer&Cornell &Texas &Wisconsin &Cora-ML &CiteSeer \\\midrule
ChebNet&60.0$\pm$2.3&64.8$\pm$2.4&65.9$\pm$2.8&64.8$\pm$0.7&58.0$\pm$0.9&70.7$\pm$2.8&58.9$\pm$4.7&67.0$\pm$4.1&72.6$\pm$1.3&67.8$\pm$1.8\\
GCN&51.9$\pm$3.2&54.1$\pm$3.0&51.7$\pm$2.6&66.9$\pm$0.5&54.6$\pm$1.2&56.0$\pm$8.1&69.8$\pm$5.4&68.0$\pm$3.2&79.7$\pm$1.1&68.8$\pm$2.2\\
APPNP&55.4$\pm$4.9&56.7$\pm$2.7&53.7$\pm$3.2&67.8$\pm$0.7&57.8$\pm$1.2&76.0$\pm$6.6&76.4$\pm$4.0&72.5$\pm$3.9&82.5$\pm$0.7&78.2$\pm$1.7\\
SAGE&62.1$\pm$4.2&52.2$\pm$2.6&56.4$\pm$3.9&50.0$\pm$0.0&48.9$\pm$1.1&74.8$\pm$6.7&69.5$\pm$4.2&71.4$\pm$3.6&68.2$\pm$0.9&68.7$\pm$1.8\\
GIN&52.3$\pm$5.2&57.6$\pm$2.4&54.9$\pm$3.5&\textbf{68.0$\pm$0.8}&56.8$\pm$1.3&69.5$\pm$5.5&76.6$\pm$4.3&74.2$\pm$4.0&83.2$\pm$1.0&76.3$\pm$1.3\\
GAT&57.3$\pm$6.4&50.0$\pm$0.0&49.9$\pm$0.8&50.0$\pm$0.0&50.0$\pm$0.1&69.0$\pm$6.8&50.5$\pm$1.7&51.4$\pm$2.4&50.1$\pm$0.1&50.2$\pm$0.6\\
DGCN&63.2$\pm$4.7&64.9$\pm$2.8&65.9$\pm$2.7&67.2$\pm$0.6&\uline{63.7$\pm$0.7}&76.7$\pm$4.4&63.4$\pm$8.9&68.1$\pm$6.7&79.4$\pm$1.4&78.6$\pm$2.2\\
DiGraph&\uline{63.7$\pm$3.7}&66.6$\pm$3.3&\textbf{67.2$\pm$2.2}&66.4$\pm$0.6&\textbf{64.7$\pm$0.7}&79.5$\pm$1.9&\uline{80.0$\pm$3.8}&\uline{82.8$\pm$5.3}&80.7$\pm$1.0&79.9$\pm$1.2\\
DiGraphIB&62.0$\pm$5.3&\uline{67.4$\pm$2.5}&66.0$\pm$1.5&66.0$\pm$0.6&62.0$\pm$0.9&\uline{81.4$\pm$2.8}&79.8$\pm$3.1&\uline{82.8$\pm$5.2}&\uline{83.2$\pm$1.1}&\uline{82.5$\pm$1.6}\\
MagNet&\textbf{64.5$\pm$4.3}&\textbf{67.6$\pm$3.5}&\uline{66.5$\pm$2.4}&\uline{67.7$\pm$0.9}&60.4$\pm$0.9&\textbf{83.3$\pm$3.5}&\textbf{80.5$\pm$5.5}&\textbf{85.7$\pm$2.6}&\textbf{86.2$\pm$0.8}&\textbf{84.8$\pm$1.3}\\
\cmidrule{2-11}
Best $q$ & 0.25 & 0.15 & 0.25 & 0.05 & 0.05 & 0.25 & 0.15 & 0.25 & 0.05 & 0.05\\
[0.5ex]
\bottomrule
\end{tabular}
}
\label{table:link_sup_three2}
\end{table}
\end{landscape}

\begin{table}[t]
    \caption{Existence prediction(\%) with noiseless labels. The best results are in \textbf{bold} and the second best are \uline{underlined}.}
    \setlength\tabcolsep{2pt}
    \centerline{
    \begin{tabular}{c c c c c c c c c} 
        \toprule
&Cornell &Texas &Wisconsin &Cora-ML &CiteSeer &WikiCS &Chameleon &Squirrel \\\midrule
ChebNet&68.6$\pm$5.1&67.7$\pm$9.9&70.1$\pm$5.6&71.2$\pm$0.8&66.0$\pm$1.6&78.4$\pm$0.3&88.7$\pm$0.3&90.4$\pm$0.2\\
GCN-D&56.7$\pm$10.4&66.1$\pm$7.5&62.9$\pm$6.0&75.5$\pm$1.1&64.0$\pm$1.8&78.3$\pm$0.3&90.1$\pm$0.3&92.0$\pm$0.2\\\midrule
APPNP-D&65.2$\pm$9.0&72.1$\pm$6.9&71.5$\pm$4.0&\textbf{78.6$\pm$0.7}&71.0$\pm$0.8&80.6$\pm$0.3&\uline{90.4$\pm$0.4}&\textbf{91.8$\pm$0.2}\\
SAGE-D&71.2$\pm$7.7&66.6$\pm$7.2&70.5$\pm$5.5&70.1$\pm$1.4&64.0$\pm$1.6&62.2$\pm$0.3&86.1$\pm$0.6&83.7$\pm$0.2\\
GIN-D&63.8$\pm$7.1&72.1$\pm$5.7&70.1$\pm$3.6&\uline{78.3$\pm$1.0}&70.1$\pm$0.9&80.5$\pm$0.3&\uline{90.4$\pm$0.4}&92.1$\pm$0.1\\
GAT-D&62.6$\pm$9.9&50.0$\pm$1.8&50.9$\pm$1.6&50.0$\pm$0.1&50.2$\pm$0.5&50.2$\pm$0.3&50.1$\pm$0.2&58.8$\pm$13.4\\\midrule
DGCN&73.2$\pm$5.3&67.1$\pm$9.8&71.8$\pm$4.5&74.0$\pm$1.0&73.4$\pm$1.2&\uline{80.7$\pm$0.3}&89.1$\pm$0.4&\uline{91.5$\pm$0.2}\\
DiGraph&71.6$\pm$5.3&84.2$\pm$3.8&\uline{79.4$\pm$3.3}&75.7$\pm$1.1&74.0$\pm$1.3&76.8$\pm$0.3&89.3$\pm$0.4&91.4$\pm$0.1\\
DiGraphIB&\uline{73.4$\pm$4.4}&\uline{85.1$\pm$5.6}&77.9$\pm$3.8&76.0$\pm$1.0&\uline{74.3$\pm$2.0}&76.9$\pm$0.4&89.3$\pm$0.5&90.8$\pm$0.1\\\midrule
MagNet&\textbf{74.7$\pm$5.4}&\textbf{85.6$\pm$4.5}&\textbf{80.1$\pm$6.2}&77.9$\pm$1.0&\textbf{76.9$\pm$1.4}&\textbf{83.3$\pm$0.2}&\textbf{90.7$\pm$0.4}&\uline{91.5$\pm$0.2}\\
\cmidrule{2-9}
Best $q$ & 0.05 & 0.10 & 0.20 & 0.10 & 0.10 & 0.05 & 0.05 & 0.05\\
\bottomrule
\end{tabular}
}
\label{table:existence_sup}
\end{table}

\begin{table}[h]
    \caption{Direction prediction(\%) with noiseless labels. The best results are in \textbf{bold} and the second best  are \uline{underlined}.}
    \setlength\tabcolsep{2pt}
    \centerline{
    \begin{tabular}{c c c c c c c c c} 
        \toprule
&Cornell &Texas &Wisconsin &Cora-ML &CiteSeer &WikiCS &Chameleon &Squirrel \\\midrule
ChebNet&74.1$\pm$5.6&72.3$\pm$10.0&69.9$\pm$6.2&73.3$\pm$1.2&69.2$\pm$2.1&71.1$\pm$0.3&94.6$\pm$0.2&95.3$\pm$0.2\\
GCN-D&54.4$\pm$8.8&76.7$\pm$6.3&73.8$\pm$4.2&80.8$\pm$1.1&70.8$\pm$2.3&78.4$\pm$0.2&97.2$\pm$0.2&97.2$\pm$0.1\\\midrule
APPNP-D&73.6$\pm$6.6&83.6$\pm$4.3&80.8$\pm$4.5&\uline{85.6$\pm$0.8}&81.0$\pm$1.8&82.9$\pm$0.2&\uline{97.6$\pm$0.2}&98.1$\pm$0.1\\
SAGE-D&77.0$\pm$5.5&77.7$\pm$6.5&76.4$\pm$3.8&69.3$\pm$0.5&70.1$\pm$1.6&56.0$\pm$0.2&94.4$\pm$0.3&93.6$\pm$1.8\\
GIN-D&69.4$\pm$6.6&84.7$\pm$4.5&80.6$\pm$3.8&84.5$\pm$0.9&78.5$\pm$1.4&82.9$\pm$0.1&97.6$\pm$0.2&98.0$\pm$0.1\\
GAT-D&71.8$\pm$10.1&51.1$\pm$1.8&52.2$\pm$2.0&50.1$\pm$0.2&50.7$\pm$0.5&50.2$\pm$0.4&50.5$\pm$1.3&68.6$\pm$16.8\\\midrule
DGCN&82.9$\pm$5.9&80.8$\pm$10.8&76.8$\pm$8.8&80.3$\pm$1.5&81.6$\pm$2.0&81.6$\pm$0.3&96.6$\pm$0.2&\uline{98.0$\pm$0.1}\\
DiGraph&83.1$\pm$4.9&89.0$\pm$2.8&\uline{87.8$\pm$4.1}&82.0$\pm$1.0&84.0$\pm$1.5&79.6$\pm$0.2&97.1$\pm$0.2&96.9$\pm$0.1\\
DiGraphIB&\uline{83.7$\pm$5.6}&\uline{89.5$\pm$3.3}&\uline{87.8$\pm$3.9}&84.3$\pm$1.4&\uline{85.1$\pm$1.4}&\uline{83.0$\pm$0.2}&\uline{97.6$\pm$0.2}&97.2$\pm$0.1\\
MagNet&\textbf{88.8$\pm$4.9}&\textbf{91.9$\pm$4.5}&\textbf{89.3$\pm$4.2}&\textbf{87.3$\pm$0.8}&\textbf{88.1$\pm$0.9}&\textbf{86.2$\pm$0.2}&\textbf{97.8$\pm$0.2}&\textbf{98.1$\pm$0.1}\\
\cmidrule{2-9}
Best $q$ & 0.10 & 0.05 & 0.25 & 0.20 & 0.10 & 0.05&0.20 & 0.10\\
\bottomrule
\end{tabular}
}
\label{table:direction_sup}
\end{table}

\begin{table}[h]
    \caption{Three classes link prediction(\%) with noiseless labels. The best results are in \textbf{bold} and the second best  are \uline{underlined}.}
    \setlength\tabcolsep{2pt}
    \centerline{
    \begin{tabular}{c c c c c c c c c} 
        \toprule
&Cornell &Texas &Wisconsin &Cora-ML &CiteSeer &WikiCS &Chameleon &Squirrel \\\midrule
ChebNet&63.0$\pm$2.1&71.5$\pm$2.0&70.5$\pm$2.1&65.6$\pm$0.5&60.3$\pm$0.8&74.3$\pm$0.1&80.7$\pm$0.3&83.9$\pm$0.1\\
GCN-D&53.0$\pm$2.5&62.6$\pm$2.6&55.8$\pm$3.1&67.5$\pm$0.6&57.1$\pm$1.1&74.5$\pm$0.2&81.3$\pm$0.3&86.2$\pm$0.1\\
APPNP-D&61.5$\pm$3.6&63.3$\pm$3.3&57.9$\pm$4.3&\textbf{68.6$\pm$0.7}&60.3$\pm$1.2&75.8$\pm$0.2&81.2$\pm$0.2&85.9$\pm$0.1\\
SAGE-D&64.8$\pm$4.0&59.2$\pm$3.2&60.7$\pm$4.6&50.9$\pm$0.1&51.1$\pm$1.2&60.8$\pm$0.1&70.0$\pm$0.3&67.3$\pm$0.2\\
GIN-D&54.6$\pm$3.8&65.2$\pm$3.7&58.4$\pm$4.0&\textbf{68.6$\pm$0.7}&59.1$\pm$1.4&76.2$\pm$0.2&81.7$\pm$0.3&\textbf{86.5$\pm$0.3}\\
GAT-D&58.8$\pm$6.4&56.9$\pm$1.4&54.2$\pm$1.0&50.8$\pm$0.1&52.2$\pm$0.2&60.8$\pm$0.1&54.2$\pm$0.1&52.5$\pm$0.0\\\midrule
DGCN&65.1$\pm$6.1&73.6$\pm$3.6&71.6$\pm$1.7&67.9$\pm$0.5&\uline{66.0$\pm$0.7}&\textbf{77.6$\pm$0.1}&80.9$\pm$0.3&85.4$\pm$0.1\\
DiGraph&\uline{66.1$\pm$4.7}&\uline{76.4$\pm$4.0}&\textbf{72.9$\pm$2.0}&67.2$\pm$0.7&\textbf{67.5$\pm$0.6}&74.4$\pm$0.2&\textbf{83.8$\pm$0.3}&\uline{86.4$\pm$0.2}\\
DiGraphIB&64.5$\pm$4.1&76.2$\pm$4.3&\uline{72.4$\pm$2.6}&66.6$\pm$0.5&64.4$\pm$0.6&71.8$\pm$0.2&\uline{83.4$\pm$0.2}&85.6$\pm$0.1\\\midrule
MagNet&\textbf{66.4$\pm$5.0}&\textbf{76.6$\pm$3.9}&71.3$\pm$2.3&68.4$\pm$0.8&63.2$\pm$1.3&\uline{77.0$\pm$0.2}&81.9$\pm$0.4&84.8$\pm$0.1\\
\cmidrule{2-9}
Best $q$ & 0.25 & 0.05 & 0.25 & 0.05 & 0.05 & 0.05 & 0.05 & 0.10\\
\bottomrule
\end{tabular}
}
\label{table:three_sup}
\end{table}

\begin{table}[h]
    \caption{Direction prediction by three classes link prediction(\%) with noiseless labels. The best results are in \textbf{bold} and the second best  are \uline{underlined}.}
    \setlength\tabcolsep{2pt}
    \centerline{
    \begin{tabular}{c c c c c c c c c} 
        \toprule
&Cornell &Texas &Wisconsin &Cora-ML &CiteSeer  &WikiCS &Chameleon &Squirrel\\\midrule
Cheb&75.6$\pm$4.9&61.6$\pm$5.4&69.7$\pm$4.1&73.4$\pm$1.3&69.4$\pm$1.5&71.1$\pm$0.2&94.6$\pm$0.2&95.3$\pm$0.1\\
GCN-D&56.6$\pm$3.0&77.9$\pm$7.0&70.9$\pm$5.1&80.6$\pm$1.1&70.3$\pm$2.1&78.4$\pm$0.2&97.2$\pm$0.2&97.2$\pm$0.1\\\midrule
APPNP-D&75.5$\pm$4.5&83.5$\pm$4.4&79.9$\pm$3.4&83.6$\pm$0.8&80.7$\pm$1.4&82.7$\pm$0.2&\uline{97.5$\pm$0.2}&\uline{98.0$\pm$0.1}\\
SAGE-D&77.3$\pm$4.5&75.0$\pm$5.4&75.8$\pm$5.0&69.2$\pm$0.6&69.7$\pm$1.6&56.0$\pm$0.3&94.4$\pm$0.3&92.8$\pm$1.3\\
GIN-D&71.9$\pm$4.6&85.6$\pm$4.1&80.6$\pm$3.8&\uline{84.4$\pm$0.8}&78.6$\pm$2.0&\uline{82.9$\pm$0.2}&97.6$\pm$0.2&\textbf{98.1$\pm$0.1}\\
GAT-D&67.3$\pm$10.8&49.7$\pm$1.6&52.3$\pm$1.9&50.0$\pm$0.1&50.1$\pm$0.3&50.2$\pm$0.5&50.1$\pm$0.1&50.0$\pm$0.0\\\midrule
DGCN&79.9$\pm$6.1&68.0$\pm$8.9&77.0$\pm$4.3&80.1$\pm$1.1&81.1$\pm$2.6&81.6$\pm$0.3&96.4$\pm$0.2&\uline{98.0$\pm$0.1}\\
DiGraph&\uline{85.5$\pm$4.1}&\textbf{90.4$\pm$4.0}&\uline{87.6$\pm$4.7}&82.0$\pm$1.0&83.0$\pm$1.2&79.6$\pm$0.2&97.1$\pm$0.2&96.9$\pm$0.1\\
DiGraphIB&85.2$\pm$4.7&\uline{89.9$\pm$3.4}&87.5$\pm$4.2&84.2$\pm$1.1&\uline{85.2$\pm$1.3}&82.2$\pm$0.3&97.1$\pm$0.2&96.9$\pm$0.1\\\midrule
MagNet&\textbf{88.5$\pm$3.9}&87.9$\pm$8.4&\textbf{90.2$\pm$3.1}&\textbf{87.4$\pm$0.7}&\textbf{87.9$\pm$1.1}&\textbf{85.4$\pm$0.3}&\textbf{97.8$\pm$0.2}&\uline{98.0$\pm$0.1}\\
\cmidrule{2-9}
Best $q$ & 0.25 & 0.05 & 0.25 & 0.05 & 0.05  & 0.05 & 0.05 & 0.10\\
\bottomrule
\end{tabular}
}
\label{table:threedirect_sup}
\end{table}

\section{Optimal $q$ values for synthetic data}\label{sec: bestq}
Optimal $q$ values for synthetic graphs are shown in Tables \ref{table:class_alpha}, \ref{table:class_meta}, \ref{table:classcyc}, and \ref{table:classcycfill}. 
We observe that the optimal $q$ is smaller for node classification of cyclic DSBM graphs than the ordered and noisy cyclic DSBM graphs.
For cyclic DSBM graphs, the cluster is relatively clear by checking connectivity even without direction information.
But the direction is crucial for classification for the other two types of DSBM graphs.
It indicates that a smaller $q$ $(q<0.15)$ is enough for node classification of directed graphs when the direction is less critical. And a larger $q$ $(q>0.15)$ is needed to encode more direction information in the phase matrix for better performance. If the cluster is evident in the symmetrized adjacency matrix, we can use $q=0$, and MagNet will reduce to ChebNet as in the results on \textit{Cora-ML} and \textit{CiteSeer} in Table \ref{table:sup_class}.

\FloatBarrier

\bibliographystyle{plain}
\bibliography{dgnnbibliography_neurips}

\end{document}